\newtheorem{theorem}{Theorem}
\newtheorem*{proof}{Proof}[section]
\newtheorem{assumption}{Assumption}
\title{DOTIN: Dropping Task-Irrelevant Nodes for GNNs}
\author{Shaofeng Zhang$^1$, Feng Zhu$^2$, Junchi Yan$^{1}$\thanks{Junchi Yan is the correspondence author. Shaofeng Zhang, Junchi Yan, Xiaokang Yang are with Department of Computer Science and Engineering, and MoE Key Lab of Artificial Intelligence, Artificial Intelligence Institute, Shanghai Jiao Tong University. Rui Zhao is also with Qing Yuan Research Institute, Shanghai Jiao Tong University. The work was partly supported by National Key Research and Development Program of China (2020AAA0107600) and Shanghai Municipal Science and Technology Major Project (2021SHZDZX0102).}, Rui Zhao$^{1,2}$, Xiaokang Yang$^1$\\
$^1$Shanghai Jiao Tong University, $^2$SenseTime Research\\
\texttt{\{sherrylone, yanjunchi, xkyang\}@sjtu.edu.cn}\\ \texttt{\{zhufeng, zhaorui\}@sensetime.com}
}
\begin{document}

\maketitle

\begin{abstract}
Scalability is an important consideration for deep graph neural networks. Inspired by the conventional pooling layers in CNNs, many recent graph learning approaches have introduced the pooling strategy to reduce the size of graphs for learning, such that the scalability and efficiency can be improved. However, these pooling-based methods are mainly tailored to a single graph-level task and pay more attention to local information, limiting their performance in multi-task settings which often require task-specific global information. In this paper, departure from these pooling-based efforts, we design a new approach called DOTIN (\underline{D}r\underline{o}pping \underline{T}ask-\underline{I}rrelevant \underline{N}odes) to reduce the size of graphs. Specifically, by introducing $K$ learnable virtual nodes to represent the graph embeddings targeted to $K$ different graph-level tasks, respectively, up to 90\% raw nodes with low attentiveness with an attention model -- a transformer in this paper, can be adaptively dropped without notable performance decreasing. Achieving almost the same accuracy, our method speeds up GAT about 50\% on graph-level tasks including graph classification and graph edit distance (GED) with about 60\% less memory, on D\&D dataset. Code will be made public available in \href{https://github.com/Sherrylone/DOTIN}{https://github.com/Sherrylone/DOTIN}.

\end{abstract}

\section{Introduction}
\vspace{-6pt}




In recent years, there has been a surge of interest in developing graph neural networks (GNNs) to extract semantic features of graph-structured data, such as social network data~\cite{GIN, GCN, GAT} and graph-based representations of molecules~\cite{molecules}. The success behind GNNs (against MLPs) is message passing between nodes by using (task-)specific prior knowledge of node adjacency~\cite{mpassing}. However, the message passing also brings $\mathcal{O}(N^2)$ complexity (each node computes the weighted average embeddings of its neighbors). To reduce the complexity and enhance the model's scalability, for graph-level tasks, one plausible solution is narrowing down the graph size hierarchically, which can be fulfilled by graph pooling, as the counterpart to that in CNNs~\cite{CNN}.

Graph pooling methods~\cite{diffpooling, pooling1, topk, SAPool} usually stack GNN and pooling layers, which extract local (sub-graph) representations from several neighbor nodes. Previous pooling methods mainly differ in how to divide sub-graphs~\cite{diffpooling, pooling1} and assign pooling weights of nodes~\cite{topk, pooling1}. Then, the extracted sub-graph representations are fed into the next GNN layer to reduce the overall graph size. The whole architecture of the pooling-based GNN is akin to CNNs, which uses convolution layers as filters and pooling layers to increase the receptive field. However, these pooling methods show defects in two aspects. \textbf{i) structure design}, pooling methods over-emphasize the local information~\cite{vit}, but ignore global interaction which is useful for down-stream tasks (classification~\cite{vit}); \textbf{ii) learning paradigm}, pooling schemes are commonly used in GNNs which are task-agnostic, since they don't use the global representation to select nodes to drop, and correspondingly, they can \textbf{not} learn which nodes are \textit{task-irrelevant} for specific tasks, as illustrated in Fig.~\ref{fig:illustration} and in our later ablation studies.

In this paper, inspired by the recent transformer-based methods~\cite{vit, evit}, we propose DOTIN to tackle the above two problems. Specifically, DOTIN uses $K$ virtual nodes to directly capture global information targeted to $K$ different tasks (each virtual node learns one task-specific global information). Then, the virtual nodes compute the mean attentiveness of the $K$ tasks, and adaptively select to drop task-irrelevant nodes over the whole graph. In this paper, task-irrelevant nodes are defined as those with lower attentiveness by the softmax based attention mechanism as will be shown in the technical part of the paper. Similar meaning for edges are also termed in~\cite{dropedge2}. The highlights of the paper are:
\begin{wrapfigure}{r}{0.42\textwidth}
\vspace{-15pt}
  \begin{center}
    \includegraphics[width=0.44\textwidth]{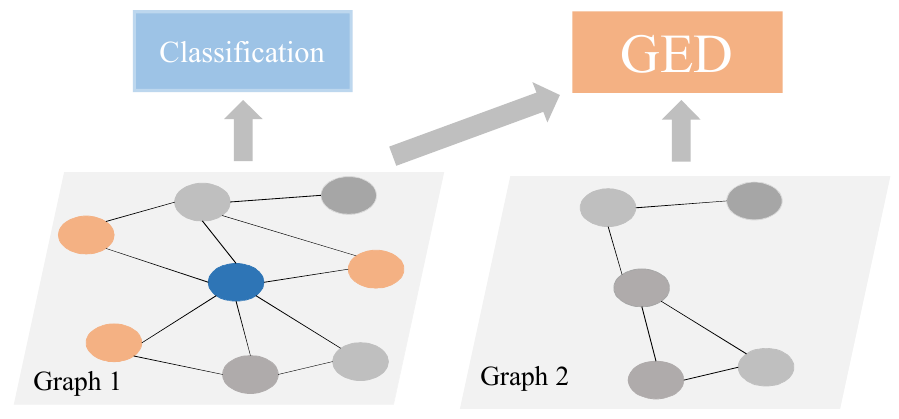}
  \end{center}
  \vspace{-10pt}
  \label{fig:illustration}
  \caption{Motivation of multiple virtual nodes setting. For single-graph tasks like graph classification, we may emphasize the center node (blue) of the graph. However, for pair-graph tasks like graph matching or other comparison tasks, the three orange nodes can be more important.}
  \vspace{-15pt}
\end{wrapfigure}

\textbf{1) }We show the phenomenon, i.e., in existing GNNs architecture, for different tasks, \textit{task-irrelevant} nodes may be similar but not the same (illustrated in Fig.~\ref{fig:illustration}). The proposed DOTIN can find those task-irrelevant nodes in both single-task and multi-task settings. As far as we know, DOTIN is the first for proposing using virtual nodes for multi-task learning.

\textbf{2) }To our best knowledge, this is a new paradigm (agnostic to the choice of GNNs architecture e.g. GAT~\cite{GAT}, GCN~\cite{GCN}) to reduce the graph size by dropping nodes hierarchically, while previous methods almost adopt graph pooling. Besides, DOTIN only incurs $\mathcal{O}(N \log N)$ complexity by introducing and sorting $K$ vectors without any other parameters. In contrast, previous methods usually require extra clustering~\cite{diffpooling}, or GNN~\cite{mincut, diffpooling} and GCN~\cite{sagpool} layers that will bring much more time and space complexity than sorting.

\textbf{3) }We conduct experiments on both single-task and multi-task settings on benchmarks. Results show that our methods outperform peer pooling methods. In particular, DOTIN drops 90\% nodes without performance drop against the baseline and achieves about 50\% training speed gain on D\&D dataset.

\vspace{-6pt}
\section{Related Work}
\vspace{-6pt}
Here we briefly discuss the relevant works on cost-efficient GNN design and computing by edge/node drop and effective pooling. More comprehensive review is given in appendix.

\textbf{Graph pooling. }The mainstream of graph pooling methods can be divided into global and hierarchical approaches~\cite{rethinking1}. Global methods aggregate all nodes' representations either via simple flatten schemes, such as summation and average~\cite{GCN, GAT}. While hierarchical approaches coarsen graph representations layer-by-layer. DiffPool~\cite{diffpooling} is the seminal work of hierarchical approach, which downsamples graphs by clustering nodes in input graphs, and computes the assignment matrix in the $l$-th layer $\mathcal{S}^{(l)}$ with learned clusters. Specifically, nodes in the $l$-th layer are assigned by:
\begin{small}
\begin{equation}
    \mathbf{S}^{(l)} = \texttt{Softmax}\left(\texttt{GNN}_{l}(\mathbf{A}^{(l)}, \mathbf{X}^{(l)})\right), \ \ \mathbf{A}^{(l+1)} = \mathbf{S}^{(l) \top} \mathbf{A}^{(l)}\mathbf{S}^{(l)}
\end{equation}
\end{small}
where $\mathbf{X}^{(l)}$ and $\mathbf{A}^{(l)}$ are the node features and adjacency matrix of the $i$-th layer. Hence, a clustering complexity will be introduced. gPool~\cite{topk} designs a learnable vector to choose nodes to be retained. However, there's no regularization on the learnable vector, which may inappropriately delete \textit{task-relevant} and important nodes. SAGPool~\cite{sagpool} addresses this issue by using a graph convolution layer, followed by Sigmoid function to learn which nodes should be masked. Nevertheless, it introduces a GNN layer parameters and may ignore task-relevant information. 

\textbf{Edge/node drop. }Edge drop is often adopted as a regularization technique in GNNs, especially for preventing from over-smoothing and for better generalization~\cite{dropedge,dropedge2}. However, they can rarely help improve the scalability of the GNN model as the node size does not change. Accordingly, node drop is recently considered in DropGNN~\cite{dropgnn}, which is in fact the only node dropping work so far we have identified. DropGNN proposes to drop nodes randomly several times and ensemble each predictive results. However, by the DropGNN learning paradigm, some important nodes may be dropped, which limits their performance and stability. Besides, multiple running and ensemble the prediction of each dropped graph also increase the complexity and training time. Inspired by their work, give one or multiple specific task(s), it is more attractive to develop adaptive mechanism to select the task-irrelevant nodes for dropping in one shot.


\begin{figure}[tb!]
    \centering
    \includegraphics[width=0.9\textwidth]{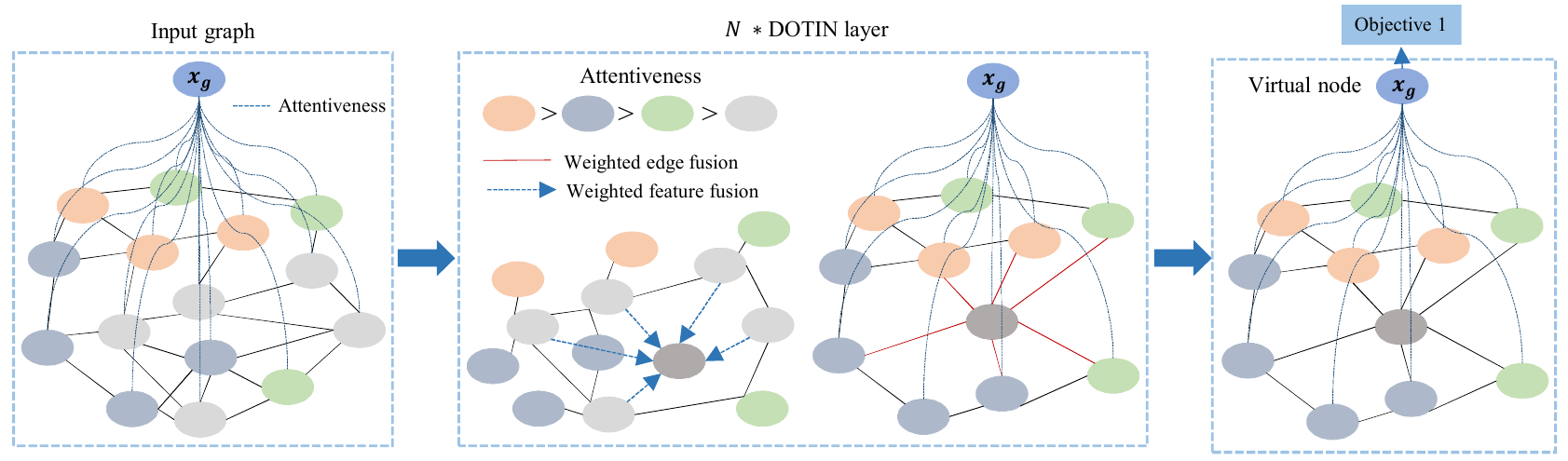}
    \vspace{-10pt}
    \caption{Framework of DOTIN for single task setting. The global virtual node is initially fully connected. Then five nodes in dash gray are fused in one global sub-global nodes. The new node (fused) reserves both structure (edges in red line) and statistic information (feature fusion). }
    \label{fig:singletask}
    \vspace{-10pt}
\end{figure}
\begin{figure}[tb!]
    \centering
    \includegraphics[width=0.9\textwidth]{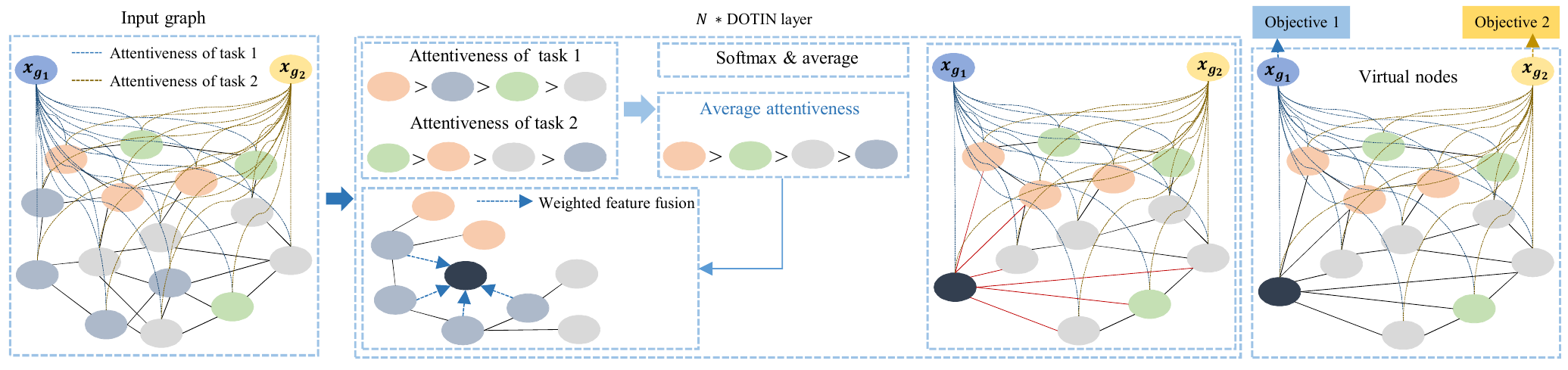}
    \vspace{-10pt}
    \caption{Framework of DOTIN for multiple-task setting (two tasks). Both global virtual nodes are initially fully-connected with Gaussian distribution. Then the mean attentiveness is used for replacing the attentiveness in each task. We also use one new node to represent the dropped sub-graph, which is similar to single task in Fig.~\ref{fig:singletask}. The overall objective is the sum of two single-task objectives.}
    \label{fig:multitask}
    \vspace{-10pt}
\end{figure}

\vspace{-6pt}
\section{Methodology}
\vspace{-6pt}

\subsection{Preliminaries}
\vspace{-6pt}

We first briefly review the propagation mechanisms as used in GNNs regardless the specific backbone choice e.g. GAT~\cite{GAT} or GCNs~\cite{GCN} and we will compare them in our experiments. Denote one input graph as $\mathcal{G} = \{\mathbf{X}^{(0)}, \mathbf{A}^{(0)}\}$, where $\mathbf{X}^{(0)} \in \mathbb{R}^{N \times F}$ and $\mathbf{A}^{(0)} \in \mathbb{R}^{N \times N}$ are input feature matrix and adjacency matrix, respectively. Graph convolution propagates graph by:
\begin{small}
\begin{equation}
\label{eqn:GCN}
    \mathbf{X}^{(l+1)} = \sigma\left((\mathbf{D}^{(l)})^{-1/2} \mathbf{\hat{A}}^{(l)} (\mathbf{D}^{(l)})^{-1/2} \mathbf{X}^{(l)} \theta^{(l)}\right), \ \ \mathbf{A}^{(l+1)} = \mathbf{A}^{(l)},
\end{equation}
\end{small}
where $\mathbf{X}^{(l)}$ and $\mathbf{A}^{(l)}$ is the feature and adjacency matrix in the $l$-th layer, respectively. $\mathbf{\hat{A}}^{(l)} \in \mathbb{R}^{N \times N}$ is the adjacency matrix with self-loop. $\mathbf{D}^{(l)}$ is the degree matrix of $\mathbf{\hat{A}}^{(l)}$. $\theta^{(l)}$ is the learnable linear parameters of the $l$-th layer. Rather than use pre-defined edge weights, GAT tends to learn weights by itself, and the weights are computed by:
\begin{small}
\begin{equation}
    \mathbf{S}^{(l)} = \texttt{Softmax}\left((\mathbf{X}^{(l)} \mathbf{W}_1^{(l)} (\mathbf{X}^{(l)} \mathbf{W}_2^{(l)})^{\top}) \odot \mathbf{\hat{A}}\right)
\end{equation}
\end{small}
where $\odot$ is the element-wise product, and $\mathbf{W}^{(l)}$ are the learnable weights of the $l$-th layer.

\vspace{-6pt}
\subsection{The Proposed DOTIN}
\label{sec:method}
\vspace{-6pt}
The key idea behind DOTIN is adding virtual nodes. Then, the graph-level objectives ($K$ objectives for $K$ tasks, respectively) regularize the virtual nodes one by one, forcing them to learn task-relevant information. Then, by calculating the average attentiveness of the virtual nodes and raw nodes, we know which nodes are task-irrelevant to these tasks.

\textbf{Single-task virtual node. }The input feature matrix $\mathbf{X} \in \mathbb{R}^{N \times F}$ is firstly passed by a linear projection layer $\mathbf{X}^{(0)} = \mathbf{X}^{(0)}\mathbf{W}$, where $\mathbf{W} \in \mathbb{R}^{F \times D}$ and $D$ is the hidden dimension. Then a virtual node is initialized as a learnable vector, which is connected to all the other nodes in the graph. Then, the input feature matrix and matrix becomes $\mathbf{X}^{(0)} \in \mathbb{R}^{(N+1) \times D}$ and $\mathbf{A}^{(0)} \in \mathbb{R}^{(N+1) \times (N+1)}$, respectively, where the first raw feature of $\mathbf{X}^{(0)}$ is the virtual node embedding. Then, the propagation is in line with the applied backbones (e.g. GCNs, GATs etc.). For the last layer, we directly add the objective regularization on the virtual nodes. For GATs-based backbone, the propagation of global virtual node and objective are as follows, respectively:
\begin{small}
\begin{equation}
    \mathbf{x}_{g}^{(l+1)} = \sum_i \left( \frac{\exp(\mathbf{x}_g^{(l)\top} \mathbf{x}_i^{(l)})}{\sum_j \exp( \mathbf{x}_g^{(l)\top} \mathbf{x}_j^{(l)})} \cdot \mathbf{x}_{i}^{(l)} \right), \ \ 
    \mathcal{J} = \mathcal{L}_{cls, reg} (f(\mathbf{x}_{g}), y),
\end{equation}
\end{small}
where $\mathbf{x}_{g}$ is the embedding of virtual node (global embedding) and $\mathbf{x}_i$ is the $i$-th node of input graph. $y$ is the graph-level task label, and $f$ is the last linear layer to apply to different down-stream tasks.

\textbf{Multi-task virtual nodes. }For $K$ different graph-level tasks, we initialize $K$ learnable global virtual nodes with random distribution, where each virtual nodes are connected with the whole graph. The propagation rule is the same with single-task setting in hidden layers. Then, for the last layer, the propagation and objectives become to:
\begin{small}
\begin{equation}
    \mathbf{x}_{g_k}^{(l+1)} = \sum_i \left( \frac{\exp(\mathbf{x}_{g_k}^{(l)\top} \mathbf{x}_i^{(l)})}{\sum_j \exp( \mathbf{x}_{g_k}^{(l)\top} \mathbf{x}_j^{(l)})} \cdot \mathbf{x}_{i}^{(l)} \right), \ \ 
    \mathcal{J} = \frac{1}{K} \sum_i \mathcal{L}_{cls, reg} (f(\mathbf{x}_{g_i}), y_i),
\end{equation}
\end{small}
where $\mathbf{x}_{g_i}$ and $y_i$ are the global representation and ground truth for the $i$-th task.

\begin{assumption}
\label{assum}
Given one task-specific global representation $\mathbf{x}_{g}$ of a graph, if the nodes embedding $sim(\mathbf{x}_{1}, \mathbf{x}_g)>sim(\mathbf{x}_{2}, \mathbf{x}_g)$, we assume node $\mathbf{x}_1$ is more task-relevant than $\mathbf{x}_2$.
\end{assumption}

\textbf{Node dropping by attentiveness. }Assume in the $l$-th layer, there are $N+K$ nodes in input graph (including $K$ virtual nodes). Give a pre-defined drop ratio $0 < \alpha^{(l)} < 1$ for layer $l$, there are $M=\lceil N\cdot (1-\alpha) + K \rceil + 1$ nodes remained after dropping the raw nodes, including the $K$ virtual nodes and one sug-global node, which we will describe in detail below. Inspired by EViT~\cite{evit}, we calculate the attentiveness of virtual nodes as well as the remaining raw nodes by:
\begin{small}
\begin{equation}
\label{eqn:attentiveness}
    s_{i} = \sum_k (\mathbf{x}_{g_k} \mathbf{W}_1)^{\top} (\mathbf{x}_{i} \mathbf{W}_2),\ \  \mathbf{s} = \texttt{Softmax}(\mathbf{s}/ \tau), \ \ 0 < i \leq N
\end{equation}
\end{small}
where $\mathbf{W}_1$ and $\mathbf{W}_2$ are two learnable parameters in GATs~\cite{GAT}, $\tau$ is set $D^{1/2}$ by default. After obtaining the attentiveness of all the remaining raw nodes, by Assumption~\ref{assum}, we sort the attentiveness and find the index of the smallest $\lfloor N\cdot \alpha \rfloor$ number of nodes, which are called \textit{task-irrelevant} nodes in this paper. Then we introduce a new node to represent the sub-graph composed of these task-irrelevant nodes. The feature of the new node is obtained by mixing the task-irrelevant nodes with different weights (attentiveness), i.e.,
\begin{small}
\begin{equation}
\label{eqn:reweight}
    \mathbf{x}_{\lceil N\cdot (1-\alpha) \rceil + 1}' = \sum_i^{\lfloor N\cdot \alpha \rfloor} \lambda_i \cdot \mathbf{x}_i, \ \ \mathbf{\lambda} = \texttt{Softmax}([s[1], \cdots, s[\lfloor N\cdot \alpha \rfloor])
\end{equation}
\end{small}
To further reserve the structural information of the sub-graph, the new node connects all nodes connected with the task-irrelevant sub-graph and the edge weights is modified by:
\begin{small}
\begin{equation}
    w_{i, \lceil N\cdot (1-\alpha) \rceil + 1}' = \sum_j^{\lfloor N\cdot \alpha \rfloor} w_{i, j},\ \ \mathbf{w}_{i, :} = \texttt{Softmax}(\mathbf{w}_{i, :})
\end{equation}
\end{small}
where $\mathbf{w}_{i, :}$ is the vector composed of edge weights of node $i$ and the other nodes. This design is motivated by: 1) if two or more nodes in the task-irrelevant sub-graph connect nodes in the reserved graph, we think the node is more related to the task-irrelevant sub-graph, and the weight of new node with the node in reserve graph is the summation followed by a softmax regularization, which is more comprehensible than average (ignore the connection degree) and counting (ignore edge weights). Finally, we stack the propagation layers and node dropping stage one by one.

\textbf{Backbone choose. }Here, we briefly analyze why we prefer to choose GAT but not GCNs (or other methods with pre-defined edge weights, e.g., GIN~\cite{GIN} and GraphSAGE~\cite{graphsage}) as the backbone. On one hand, the attentiveness in Eq.~\ref{eqn:attentiveness}, requires two linear layers and a dot product operation, which are also components in GAT layer. Nevertheless, GCN layer doesn't include these operations, which makes DOTIN less efficient in GCN. On the other hand, GCN-based backbones are also unsuitable for multi-task settings, which we will explain in detail below. The forward propagation rule is given in Eq.~\ref{eqn:GCN}, where $\mathbf{A}$ is updated in each layer. Consider $K$ tasks with $\alpha$ drop ratio in a graph including $N$ nodes, the output of DOTIN contains $K + 1 + \lfloor N \cdot (1 - \alpha) \rfloor$ nodes. Then, for the next layer, since each virtual node is initially fully-connected, their embedding is modified by:
\begin{small}
\begin{equation}
    \mathbf{x}_{g_k} = \frac{1}{2 + \lfloor N \cdot (1-\alpha) \rfloor} \left ( \sum_i \mathbf{x}_i + \mathbf{x}_{g_k}\right)\ \ \ for\ \ 1 \leq k \leq K
\end{equation}
\end{small}
We can observe the embeddings of virtual nodes are equivalent, which is contradictory to our initial desire, i.e., applying different virtual nodes to different tasks.

\vspace{-5pt}
\subsection{Theoretical analysis}
\vspace{-5pt}
DOTIN does not directly drop the task-irrelevant sub-graph,  instead it constructs a new node to reserve the structural information. Here, we theoretically analyze the motivation of our design.

\begin{theorem}
\label{the:higher}
\textbf{Attentive pooling for improving task-relevant node embedding. }Define the distance of two mode feature vectors by $l_2(\mathbf{x}_1, \mathbf{x}_2)=\Vert \mathbf{x}_1-\mathbf{x}_2 \Vert_2^2$. Given a graph learned with a given graph-level task $\mathcal{T}$ e.g. graph classification with node embeddings $\{\mathbf{x}_i\}_{i=1}^{N}$ of its sub-graph and the global graph-level embedding $\mathbf{x}_{g}$ learned from the given task, where $N$ is the number of nodes in the graph. Then, the nodes $\mathbf{x}_{new}$ produced by attentive pooling has closer distance to $\mathbf{x}_{g}$ than the average distance of the sub-graph before pooling, i.e., $l_2(\mathbf{x}_g, \mathbf{x}_{new}) < \frac{1}{\lceil N \cdot \alpha \rceil} \sum_i l_2(\mathbf{x}_g, \mathbf{x}_i)$.
\end{theorem}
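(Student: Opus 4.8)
The plan is to route the argument through the $\bm\lambda$-weighted average of the squared distances, $\sum_i\lambda_i d_i$ with $d_i:=l_2(\mathbf{x}_g,\mathbf{x}_i)$, which sits between the two quantities being compared. Write $m$ for the number of nodes in the task-irrelevant sub-graph (the $\lceil N\cdot\alpha\rceil$ appearing in the denominator). Since $\bm\lambda=\texttt{Softmax}(\cdot)$ is a probability vector, $\lambda_i\ge 0$ and $\sum_i\lambda_i=1$, so $\mathbf{x}_{new}=\sum_i\lambda_i\mathbf{x}_i$ is a genuine convex combination and $\mathbf{x}_g-\mathbf{x}_{new}=\sum_i\lambda_i(\mathbf{x}_g-\mathbf{x}_i)$.

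First I would apply Jensen's inequality to the convex map $\mathbf{u}\mapsto\Vert\mathbf{u}\Vert_2^2$, giving
$$l_2(\mathbf{x}_g,\mathbf{x}_{new})=\Big\Vert\sum_i\lambda_i(\mathbf{x}_g-\mathbf{x}_i)\Big\Vert_2^2\le\sum_i\lambda_i\,d_i,$$
with strict inequality whenever the sub-graph nodes are not all identical, which is the generic case and is what ultimately supplies the strict sign in the theorem.

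Second I would show that the $\bm\lambda$-weighted average of the distances does not exceed their uniform average, i.e. $\sum_i\lambda_i d_i\le\frac1m\sum_i d_i$. The lever here is the monotonicity built into the construction: $\lambda_i$ is a strictly increasing function (softmax) of the attentiveness $s_i$, and by Assumption~\ref{assum} a larger $s_i$ means node $i$ is more task-relevant, i.e. more similar to $\mathbf{x}_g$ and hence closer to it in $l_2$. Thus the sequences $(\lambda_i)$ and $(d_i)$ are oppositely ordered. Writing $\bar d=\frac1m\sum_i d_i$ and using $\sum_i\lambda_i=1$,
$$\sum_i\lambda_i d_i-\bar d=\sum_i\Big(\lambda_i-\tfrac1m\Big)(d_i-\bar d)\le 0,$$
the last inequality being Chebyshev's sum inequality for oppositely-ordered sequences (equivalently, the empirical covariance between oppositely-ordered $(\lambda_i)$ and $(d_i)$ is non-positive). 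Chaining the two displays yields $l_2(\mathbf{x}_g,\mathbf{x}_{new})<\frac1m\sum_i d_i$, which is the claim.

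The main obstacle is making the bridge in the second step rigorous: $\lambda_i$ is determined by the inner-product attentiveness $s_i=\sum_k(\mathbf{x}_{g_k}\mathbf{W}_1)^\top(\mathbf{x}_i\mathbf{W}_2)$, whereas $d_i$ is a Euclidean distance, and ``large inner product $\Rightarrow$ small distance'' only holds once node norms are controlled, since $d_i=\Vert\mathbf{x}_g\Vert_2^2+\Vert\mathbf{x}_i\Vert_2^2-2\mathbf{x}_g^\top\mathbf{x}_i$. I would therefore either state comparable node norms as a standing assumption (so that the raw nodes, after the projection/normalization, have roughly equal $\Vert\mathbf{x}_i\Vert_2$) or reinterpret the similarity in Assumption~\ref{assum} directly as $-d_i$, so that the opposite-ordering needed for the Chebyshev/covariance step is guaranteed; once that monotone link is pinned down, the remaining algebra is routine.
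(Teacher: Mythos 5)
Your proposal is correct in outline and takes a genuinely different route from the paper's proof in its key step. The paper splits the argument into two claims in two different ``currencies'': first it uses the exact bias--variance identity $\frac1m\sum_i\Vert\mathbf{a}-\mathbf{b}_i\Vert_2^2=\Vert\mathbf{a}-\bar{\mathbf{b}}\Vert_2^2+\frac1m\sum_i\Vert\mathbf{b}_i-\bar{\mathbf{b}}\Vert_2^2$ to show that the \emph{uniform} average node is closer to $\mathbf{x}_g$ than the average distance, and then it switches to inner products, showing via the same positively-correlated-weights observation you invoke that the $\bm\lambda$-weighted node has \emph{higher attentiveness} $s'$ than the uniform average's $s$. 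You instead stay with distances throughout: Jensen on $\mathbf{u}\mapsto\Vert\mathbf{u}\Vert_2^2$ (which generalizes the paper's identity from uniform to arbitrary convex weights, discarding the variance term) followed by Chebyshev's sum inequality to pass from the $\bm\lambda$-weighted average of the $d_i$ to the uniform one. This buys you something real: your chain $l_2(\mathbf{x}_g,\mathbf{x}_{new})\le\sum_i\lambda_i d_i\le\frac1m\sum_i d_i$ directly establishes the inequality the theorem actually states about the weighted node $\mathbf{x}_{new}$, whereas the paper's proof only establishes the distance inequality for the uniform average and the attentiveness inequality for $\mathbf{x}_{new}$, leaving the bridge ``higher attentiveness $\Rightarrow$ smaller distance'' implicit. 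The caveat you flag --- that large inner product implies small $l_2$ distance only when the node norms $\Vert\mathbf{x}_i\mathbf{W}_2\Vert_2$ are comparable, because of the $\Vert\mathbf{x}_i\Vert_2^2$ term in the expansion of $d_i$ --- is precisely the assumption needed to close that implicit bridge in the paper's version as well, so your proof is no weaker than the original on this point and is more honest about where the hypothesis is consumed; your strictness condition (not all sub-graph nodes identical) also matches the paper's ``for shallow GNNs the node features are usually different'' remark.
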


\begin{proof}
To prove the theorem, we first prove the average pooling can reduce the distance, and in the second step, we prove the weighted by attentiveness pooling will construct a new node with larger attentiveness. By the definition, we have:
\begin{small}
\begin{equation}
\begin{aligned}
    \frac{1}{\lceil N \cdot \alpha \rceil} \sum_i & \left\|\mathbf{x}_g  \mathbf{W}_1 - \mathbf{x}_i \mathbf{W}_2 \right\|_2^2 = \frac{1}{\lceil N \cdot \alpha \rceil} \sum_i\left(\left\| \mathbf{x}_g \mathbf{W}_1 \right\|_2^2 + \left\| \mathbf{x}_i \mathbf{W}_2 \right\|_2^2 - 2 \cdot (\mathbf{x}_g \mathbf{W}_1)^{\top} (\mathbf{x}_i \mathbf{W}_2)\right) \\
    & = \left\| \mathbf{x}_g \mathbf{W}_1 - \frac{1}{\lceil N \cdot \alpha \rceil} \sum_i \mathbf{x}_i \mathbf{W}_2 \right\|_2^2 - \left\| \frac{1}{\lceil N \cdot \alpha \rceil} \sum_i \mathbf{x}_i \mathbf{W}_2 \right\|_2^2 + \frac{1}{\lceil N \cdot \alpha \rceil} \sum_i \left\| \mathbf{x}_i \mathbf{W}_2 \right\|_2^2 \\
    & = \left\| \mathbf{x}_g \mathbf{W}_1 -\frac{1}{\lceil N \cdot \alpha \rceil} \sum_i \mathbf{x}_i \mathbf{W}_2 \right\|_2^2 + \frac{1}{\lceil N \cdot \alpha \rceil} \sum_i \left \| \frac{1}{\lceil N \cdot \alpha \rceil} \sum_j \mathbf{x}_j \mathbf{W}_2 - \mathbf{x}_i \mathbf{W}_2 \right\|_2^2
\end{aligned}
\end{equation}
\end{small}
where the first term equals to $l_2(\mathbf{x}_g \mathbf{W}_1, avg(\mathbf{x}_i) \mathbf{W}_2)$ and the second term equals to $avg(l_2(\mathbf{x}_i \mathbf{W}_2, avg(\mathbf{x}_j) \mathbf{W}_2))$, which are thus both non-negative. For shallow GNNs, the node features are usually different, leading the second term R.H.S larger than 0. Then, we derive $\frac{1}{\lceil N \cdot \alpha \rceil} \sum_i \Vert\mathbf{x}_g \mathbf{W}_1 - \mathbf{x}_i \mathbf{W}_2 \Vert_2^2 \geq \Vert \mathbf{x}_g \mathbf{W}_1 -\frac{1}{\lceil N \cdot \alpha \rceil} \sum_i \mathbf{x}_i \mathbf{W}_2 \Vert_2^2$, i.e., the averagely constructed node has closer distance than the average distance of the task-irrelevant sub-graph. Then, we begin proving the weighted pooling by attentiveness will bring higher attentiveness. As defined in Eq.~\ref{eqn:attentiveness}, The attentiveness is calculated by $s_i = (\mathbf{x}_g \mathbf{W}_1)^{\top} (\mathbf{x}_i \mathbf{W}_2)$ followed by a softmax. For average sub-graph pooling, we have $s = (\mathbf{x}_g \mathbf{W}_1)^{\top} (\frac{1}{\lceil N \cdot \alpha \rceil} \sum_i \mathbf{x}_i \mathbf{W}_2) = \frac{1}{\lceil N \cdot \alpha \rceil} \sum_i (\mathbf{x}_g \mathbf{W}_1)^{\top} (\mathbf{x}_i \mathbf{W}_2)$, while for weighted sub-graph pooling, we have $s' = (\mathbf{x}_g \mathbf{W}_1)^{\top} (\mathbf{x}_{new} \mathbf{W}_2) = \sum_i (\mathbf{x}_g \mathbf{W}_1)^{\top} (\lambda_i \mathbf{x}_i \mathbf{W}_2)$. For $(\mathbf{x}_g \mathbf{W}_1)^{\top} (\mathbf{x}_{i} \mathbf{W}_2)>(\mathbf{x}_g \mathbf{W}_1)^{\top} (\mathbf{x}_{j} \mathbf{W}_2)$, we must have $\lambda_i > \lambda_j$ (attentiveness definition), while by the regularization in Eq.~\ref{eqn:reweight}, we have $\sum_i \lambda_i = 1$. Thus, we derive $s'>s$ and complete the proof.
\end{proof}

Theorem~\ref{the:higher} clarifies the motivation of task-irrelevant pooling methods, i.e., the newly constructed node has higher attentiveness and closer distance to the global virtual nodes. In other words, the newly constructed node is more task-relevant.

\begin{table}[tb!]
    \centering
    \resizebox{0.9\textwidth}{!}{\begin{tabular}{c|c|c|c|c|c}
        \toprule
        ~ & Time & Space & Task-relevant & Reduce method~\cite{understanding} & Extra parameters \\
        \hline
        Set2Set~\cite{set2set} & LSTM & LSTM & No & $\mathbf{X}'=\mathbf{S}\cdot GNN(\mathbf{A}, \mathbf{X})$ & LSTM layer \\
        DiffPool~\cite{diffpooling} & GNN + CLuster & GNN & No & $\mathbf{X}'=\mathbf{S}\cdot GNN(\mathbf{A}, \mathbf{X})$ & GNN layer \\
        MinCut~\cite{mincut} & Cluster + GNN + MLP & GNN + MLP & No & $\mathbf{X}'=\mathbf{S}^{\top} \mathbf{X}$ & MLP + GNN layer \\
        LaPool~\cite{lapool} & Cluster + Linear & Linear & No & $\mathbf{X}'=\mathbf{S}^{\top} \mathbf{X}$ & Linear layer \\
        gPool~\cite{topk} & $\mathcal{O}(N \log N)$ & $\mathcal{O}(N)$ & No & $\mathbf{X}'=(\mathbf{X}\odot \sigma (\mathbf{y}))$ & GNN layer \\
        SAGPool~\cite{sagpool} & GCN + $\mathcal{O}(N \log N)$ & GCN & No & $\mathbf{X}'=(\mathbf{X}\odot \sigma (\mathbf{y}))$ & GCN layer \\
        DOTIN (Ours) & $\mathcal{O}(N \log N)$ & $\mathcal{O}(1)$ & Yes & $\mathbf{X}'=(\mathbf{X}\odot \sigma (\mathbf{y}))$ & $\mathcal{O}(1)$ \\
        \bottomrule
    \end{tabular}}
    \caption{Methodology comparison. The time and space complexity are relevant to clustering iteration, GNN, MLP layers and their dimensions. gPool~\cite{topk} and DOTIN (ours) only use an extra rank time complexity. Space $\mathcal{O}(1)$ means we only use $K$ global vectors ($K=1$ when single-task setting). Task-relevant means whether the method is task-specific or not (extract multiple global embeddings).}
    \vspace{-15pt}
    \label{tab:method}
\end{table}
\vspace{-5pt}
\subsection{Methodology discussion}
\vspace{-5pt}
We compare DOTIN with previous pooling-based methods in Table~\ref{tab:method}. Here, we further clarify the connection and differences with similar methods gPool~\cite{topk} and SAGPool~\cite{sagpool}. \textbf{Connection: }All the three methods use attention methods to calculate the importance and adaptively choose task-irrelevant nodes to be dropped. \textbf{Differences: }The main difference between DOTIN and them is that DOTIN can recognize \textbf{task-irrelevant} nodes and the \textit{task-irrelevant} sub-graph pooling also makes DOTIN achieve higher accuracy. 
Moreover, DOTIN only introduces $K$ extra vectors while the previous two methods (gPool and SAGPool) have to learn  extra GNN / GCN layers, incurring additional overhead.


\vspace{-6pt}
\section{Experiments}
\vspace{-6pt}
We evaluate our method on graph classification and graph edit distance in single-task and multi-task settings, respectively. We conduct ablation studies on memory and time consumption. Experiments are mainly in line with the protocol of gPool~\cite{topk} and the mean and standard deviation are reported by 10-fold cross validation, except for the memory and time tests which can be estimated by one trial. We use GAT as backbone except for ablation in Tab.~\ref{tab:gcn} as analyzed in Section~\ref{sec:method} and all the experiments are conducted on one single GTX 2080 GPU.

\vspace{-6pt}
\subsection{Experiments setup}
\vspace{-6pt}
\textbf{Datasets. }D\&D~\cite{dd, wlkernel} contains graphs of protein structures. A node represents an amino acid and edges are constructed if the distance of two nodes is less than 6 $\mathring{A}$ (a unit of length in protein -- see~\cite{dd}). A label denotes whether a protein is an enzyme or a non-enzyme. PROTEINS~\cite{dd, protein} also contains proteins, where nodes are secondary structure elements. If nodes have edges, the nodes are in an amino acid sequence or in a close 3D space. NCI~\cite{nci1} is a biological dataset used for anticancer activity classification. In the dataset, each graph represents a chemical compound, with nodes and edges representing atoms and chemical bonds, respectively. NCI1 and NCI109 are commonly used for graph classification~\cite{GAT, sagpool}. FRANKENSTEIN~\cite{FRANKENSETEIN} is a set of molecular graphs~\cite{molecule} with node features containing continuous values. The label denotes whether a molecule is a mutagen or not.

\begin{table}[tb!]
    \centering
    \resizebox{\textwidth}{!}{\begin{tabular}{c|c|c|c|c|c|c}
        \toprule
        Dataset & \# Graphs & \# Classes & Avg. Nodes per Graph & Avg. Edges per Graph & \# Training & \# Test  \\
        \hline
        D\&D~\cite{dd} & 1178 & 2 & 284.32 & 715.66 & 1060 & 118 \\
        PROTEINS~\cite{protein} & 1113 & 2 & 39.06 & 72.82 & 1001 & 112 \\
        NCI1~\cite{nci1} & 4110 & 2 & 29.87 & 32.3 & 3699 & 411 \\
        NCI109~\cite{nci1} & 4127& 2 & 29.68 & 32.13 & 3714 & 413 \\
        FRANKENSTEIN~\cite{FRANKENSETEIN} & 4337 & 2 & 16.9 & 17.88 & 3903 & 434 \\
        \bottomrule
    \end{tabular}}
    \caption{Statistic information of the used datasets.}
    \vspace{-10pt}
    \label{tab:statistics}
\end{table}

\textbf{Graph classification task. }For all the five graph-classification datasets, we set the learning rate as 1e-3 with batch size 8 and use Adam optimizer~\cite{adam} with weight decay 8e-4. 

\textbf{Graph edit distance (GED) task. }\textbf{i) setup. }The GED between graphs $\mathcal{G}_1$ and $\mathcal{G}_2$ is defined as the minimum number of edit operations needed to transform $\mathcal{G}_1$ to $\mathcal{G}_2$~\cite{ged}. Typically the edit operations include add/remove/substitute nodes and edges. Computing GED is known NP-hard in general \cite{nphard}, therefore approximations are used. There are also attempts by deep graph model \cite{ged,wang2021cvpr} and we adopt the same setting. In detail, triplet pairs are constructed by editing graph (substitute and remove) edges, which is an unsupervised model. Specifically, they substitute $k_p$ edges from graph $\mathcal{G}_1$ to generate $\mathcal{G}_{1p}$, then substitute $k_n$ edges to generate $\mathcal{G}_{1n}$. By setting $k_p<k_n$, the GED between $(\mathcal{G}_1, \mathcal{G}_{1p})$ is regarded as shorter than $(\mathcal{G}_1, \mathcal{G}_{1n})$. But actually, the GED between $(\mathcal{G}_1, \mathcal{G}_{1p})$ can be smaller than $(\mathcal{G}_1, \mathcal{G}_{1n})$ due to symmetry and isomorphism. However, the probability of such cases is typically low and decreases rapidly with increasing graph sizes. \textbf{ii) evaluation metrics. }In line with \cite{ged}, our trained backbone is evaluated by two metrics: 1) pair AUC - the area under the ROC curve for classifying pairs of graphs as similar or not and 2) triplet accuracy - the accuracy of correctly assigning higher similarity to the positive pair than the negative pair, in a triplet. 

\textbf{Compared baselines. }Since our method aims to reduce the graph size hierarchically, we mainly compare with recent hierarchical graph pooling methods: DiffPool~\cite{diffpooling}, gPool~\cite{topk} and SAGPool~\cite{sagpool} as they can readily allow for node dropping in their methods. Note we do not compare with other recent methods e.g. GIN~\cite{GIN}, GraphSAGE~\cite{graphsage}, as the methodology and motivation are different from our node dropping-based scalability-purposed methodology.

We also compare with our degenerated baseline, namely using the same backbone as DOTIN (i.e. GAT) but without node drop. The metrics include both accuracy and training time. For comprehensive comparison, on graph classification, we further compare global pooling methods Set2Set~\cite{set2set}, SortPool~\cite{sortpool}, SAGPool~\cite{sagpool}, for which node dropping cannot be (easily) fulfilled. Note that we don't directly compare DOTIN with DropGNN~\cite{dropgnn}, and the reason is that DropGNN is an ensemble model and its variant single model can be thought one of our baseline (random drop), which is in-depthly compared in ablation studies (see Fig.~\ref{fig:ratio}).


\textbf{Single-task setting}
We first evaluate our method on graph classification (in Table~\ref{tab:cls}) and graph edit distance (in Table~\ref{tab:ged}) respectively. \textbf{For graph classification}, DOTIN outperforms on most of the datasets among the compared hierarchical methods. Compared with the most similar method gPool~\cite{topk}, DOTIN outperforms it on all the five datasets. Although global pooling methods (SAGPool and SortPool) get higher accuracy on NCI dataset, they only perform pooling after the final GCN layer, and the running time and complexity won't decrease but increase compared with the baseline (w/o pooling). For graph edit distance task, we report the mean accuracy and AUC scores of 10 folds in Table~\ref{tab:ged}, which is similar to graph classification. \textbf{For GED task}, DOTIN outperforms gPool~\cite{topk} with a large range, and we guess the reason is that DOTIN directly adds the regularization on the virtual node. Then, the virtual node is used for node selection, where it tends to select GED-task-irrelevant nodes to drop. In contrast, gPool~\cite{topk} does not add task-specific regularization on the designed vector, which may wrongly drop GED-task-important nodes. Besides, for both two tasks, DOTIN (w/ pooling) achieves higher accuracy and is more stable than w/o pooling, which is because the task-irrelevant pooling method reverses as much information as possible, while DOTIN (w/o pooling) simply delete the sub-graph, losing some useful information.

\textbf{Multi-task setting}
We combine classification and GED, summing up their two objectives with equal weights (see also Fig.~\ref{fig:multitask}) for graph learning. For gPool~\cite{topk} and SAGPool~\cite{sagpool}, they only extract one graph representation, so we directly add the two objective regularizations on the embedding. For DOTIN, we use two virtual nodes (for two tasks) to extract the respective task-relevant graph-level information. The drop ratio is set to [0.1, 0.2, $\cdots$, 0.9]. We randomly split each dataset into 10 folds and we set batch size 16 with 5 epochs in each fold. \textbf{Results. }Table~\ref{tab:multi-task} reports the mean accuracy of 10 folds, which show DOTIN performs best in multi-task setting. For gPool and SAGPool, both classification and GED performances drop notably. Specifically, for D\&D dataset, accuracy of DOTIN on multi-task settings and single-task settings only drop 0.09\% and 0.39\% accuracy on classification and GED, respectively. For other methods, they generally drop 2 $\sim$ 5\% accuracy. We conjecture DOTIN's performance advantage is because the designed multiple virtual nodes extract and decouple different task-relevant global information to handle the corresponding task. 
\vspace{-6pt}
\begin{table}[tb!]
    \centering
    \resizebox{0.72\textwidth}{!}{\begin{tabular}{c|c|c|c|c|c|c}
        \toprule
        ~ & Models & D\&D & PROTEINS & NCI1 & NCI109 & FRANKENSETEIN \\
        \hline
        \multirow{3}*{\rotatebox{90}{Global}} & Set2Set~\cite{set2set} & 71.27 $\pm$ 0.84 & 66.06 $\pm$ 1.66 & 68.55 $\pm$ 1.92 & 69.78 $\pm$ 1.16 & 61.92 $\pm$ 0.73 \\
        ~ & SortPool~\cite{sortpool} & 72.53 $\pm$ 1.19 & 66.72 $\pm$ 3.56 & 73.82 $\pm$ 0.96 & 74.02 $\pm$ 1.18 & 60.61 $\pm$ 0.77 \\
        ~ & SAGPool~\cite{sagpool} & \textbf{76.19 $\pm$ 0.94} & \textbf{70.04 $\pm$ 1.47} & \textbf{74.18 $\pm$ 1.20} & \textbf{74.06 $\pm$ 0.78} & \textbf{62.57 $\pm$ 0.60} \\
        \hline
        \multirow{5}*{\rotatebox{90}{Hierarchical}} & DiffPool~\cite{diffpooling} & 66.95 $\pm$ 2.41. & 68.20 $\pm$ 2.02 & 62.32 $\pm$ 1.90 & 61.98 $\pm$ 1.98 & 60.60 $\pm$ 1.62 \\
        ~ & gPool~\cite{topk} & 75.01 $\pm$ 0.86 & 71.10 $\pm$ 0.90 & 67.02 $\pm$ 2.25 & 66.12 $\pm$ 1.60 & 61.46 $\pm$ 0.84 \\
        ~ & SAGPool~\cite{sagpool} & 76.45 $\pm$ 0.97 & 71.86 $\pm$ 0.97 & 67.45 $\pm$ 1.11 & \textbf{67.86 $\pm$ 1.41} & 61.73 $\pm$ 0.76 \\
        ~ & DOTIN (w/o pooling) & 77.41 $\pm$ 0.72 & 73.50 $\pm$ 0.5 & 68.12 $\pm$ 1.07 & 67.27 $\pm$ 1.12 & 62.41 $\pm$ 1.12 \\
        ~ & DOTIN (w/ pooling) & \textbf{78.25 $\pm$ 0.61} & \textbf{74.63 $\pm$ 0.37} & \textbf{69.39 $\pm$ 1.02} & 67.62 $\pm$ 1.04 & \textbf{63.01 $\pm$ 0.92} \\
        \bottomrule
    \end{tabular}}
    \caption{Graph classification accuracy in 10 folds on benchmarks. 
    }
    \label{tab:cls}
    \vspace{-10pt}
\end{table}

\begin{table}[tb!]
    \centering
    \vspace{-5pt}
    \resizebox{0.72\textwidth}{!}{\begin{tabular}{c|c|c|c|c|c|c}
        \toprule
        \multirow{2}*{Model} & \multicolumn{2}{c|}{D\&D} & \multicolumn{2}{c|
        }{PROTEINS} & \multicolumn{2}{c}{NCI1} \\
        \cline{2-7}
        ~ & ACC & AUC & ACC & AUC & ACC & AUC \\
        \hline
        DiffPool~\cite{diffpooling} & 85.19 $\pm$ 0.52 & 72.29 $\pm$ 1.64 &71.12 $\pm$ 1.08 & 54.44 $\pm$ 2.99 & 80.68 $\pm$ 1.71 & 59.98 $\pm$ 3.38 \\
        gPool~\cite{topk} & 88.26 $\pm$ 0.61 & 72.89 $\pm$ 1.66 & 72.17 $\pm$ 1.02 & 54.91 $\pm$ 2.27 & 85.54 $\pm$  1.29 & 62.29 $\pm$ 2.98 \\
        SAGPool~\cite{sagpool} & 90.17 $\pm$ 0.44 & 73.01 $\pm$ 1.29 & 73.31 $\pm$ 1.14 & 59.18 $\pm$ 2.46 & 85.92 $\pm$ 1.19 & 64..48 $\pm$ 2.70 \\
        DOTIN (w/o pooling) & 90.67 $\pm$ 0.31 & 73.09 $\pm$ 1.22 & 75.89 $\pm$ 0.69 & 58.81 $\pm$ 2.19 & 87.34 $\pm$ 0.84 & 65.45 $\pm$ 2.61 \\
        DOTIN (w/ pooling) & \textbf{91.88 $\pm$ 0.24} & \textbf{74.27 $\pm$ 1.13} & \textbf{76.76 $\pm$ 0.57} & \textbf{59.91 $\pm$ 2.01} & \textbf{88.38 $\pm$ 0.73} & \textbf{66.83 $\pm$ 2.28} \\
        \bottomrule
    \end{tabular}}
    \caption{Accuracy (ACC) and AUC score in 10 folds of solving graph edit distance on benchmarks.}
    \label{tab:ged}
    \vspace{-10pt}
\end{table}
\begin{table}[tb!]
    \centering
    \resizebox{0.98\textwidth}{!}{\begin{tabular}{c|c|c|c|c|c|c|c|c|c}
        \toprule
        \multirow{2}*{Model} & \multicolumn{3}{c|}{D\&D} & \multicolumn{3}{c|}{PROTEINS} & \multicolumn{3}{c}{NCI1}\\
        \cline{2-10} 
        ~ & CLS & GED ACC & GED AUC& CLS & GED ACC& GED AUC & CLS & GED ACC& GED AUC\\
        \hline
        DiffPool~\cite{diffpooling} & 64.67 $\pm$ 2.92 & 81.17 $\pm$ 0.77 & 68.23 $\pm$ 1.99 & 65.20 $\pm$ 2.18 & 69.47 $\pm$ 1.42 & 52.41 $\pm$ 2.06 & 59.92 $\pm$ 1.97 & 74.15 $\pm$ 1.86 & 54.16 $\pm$ 3.21 \\
        gPool~\cite{topk} & 72.06 $\pm$ 0.99 & 89.38 $\pm$ 0.68 & 69.38 $\pm$ 1.79 & 69.92 $\pm$ 0.94 & 70.25 $\pm$ 1.16 & 52.77 $\pm$ 2.09 & 64.28 $\pm$ 2.49 & 82.47 $\pm$  1.67 & 57.37 $\pm$ 3.06 \\
        SAGPool~\cite{sagpool} & 75.53 $\pm$ 1.04 & 88.31 $\pm$ 0.57 & 71.39 $\pm$ 1.44 & 70.61 $\pm$ 0.93 & 71.16 $\pm$ 1.37 & 57.18 $\pm$ 2.16 & 68.19 $\pm$ 1.32 & 81.92 $\pm$ 2.24 & 59.94 $\pm$ 2.99 \\
        DOTIN (w/o pooling) & 77.36 $\pm$ 0.78 & 90.49 $\pm$ 0.38 & 73.01 $\pm$ 1.21 & 73.41 $\pm$ 0.62 & 75.24 $\pm$ 0.57 & 58.44 $\pm$ 2.21 & 67.99 $\pm$ 1.14 & 86.99 $\pm$ 0.92 & 64.56 $\pm$ 2.73 \\
        DOTIN (w/ pooling) & \textbf{78.14 $\pm$ 0.72} & \textbf{91.49 $\pm$ 0.31} & \textbf{74.16 $\pm$ 1.12} & \textbf{74.43 $\pm$ 0.49} & \textbf{76.16 $\pm$ 0.42} & \textbf{59.16 $\pm$ 2.03} & \textbf{68.92 $\pm$ 1.09} & \textbf{87.97 $\pm$ 0.89} & \textbf{66.29 $\pm$ 2.24} \\
        \bottomrule
    \end{tabular}}
    \caption{Classification accuracy and GED AUC/accuracy under multi-task setting.}
    \label{tab:multi-task}
    \vspace{-10pt}
\end{table}

\subsection{Ablation study}
\vspace{-6pt}
\textbf{Drop ratio effect.} We conduct ablation studies on the drop ratio $\alpha$. We construct three layers in the backbone and change the drop ratio from 0.1 to 0.9. We fix the hidden dimension as 512 and set the linear layers' connection dropout rate~\cite{dropout} 0.2 for inference (different from the meaning of the proposed node drop rate). We compare with baseline (w/o drop) and random drop. Fig.~\ref{fig:ratio} and Fig.~\ref{fig:time} show the classification accuracy and training speed (batch per second) of DOTIN with different drop ratios. Note that for random drop, the performance variance increases notably with higher drop ratio, but DOTIN is more stable even 90\% of the nodes drop. For low drop ratio, random drop plays a role of regularization, which seems without much accuracy drop. We also observe DOTIN occasionally outperforms baseline (w/o drop) sometimes, and we think this phenomenon may cause by node redundancy in original graphs, i.e., some task-irrelevant nodes influence the predictive accuracy. Fig.~\ref{fig:time} gives the training time (batches per second) with different drop ratios. The baseline (w/o drop) passes average 12.23 batches per second for a whole training epoch. After dropping 90\% nodes, DOTIN passes about 18.81 batches per second, which speeds up about 53.8\% with little accuracy drop (74.91\% v.s. 75.51\%).
\begin{figure}[tb!]
\centering
\subfigure[CLS accuracy v.s. drop ratio]{
\begin{minipage}[tb!]{0.325\linewidth}
\centering
\includegraphics[width=0.95\textwidth,angle=0]{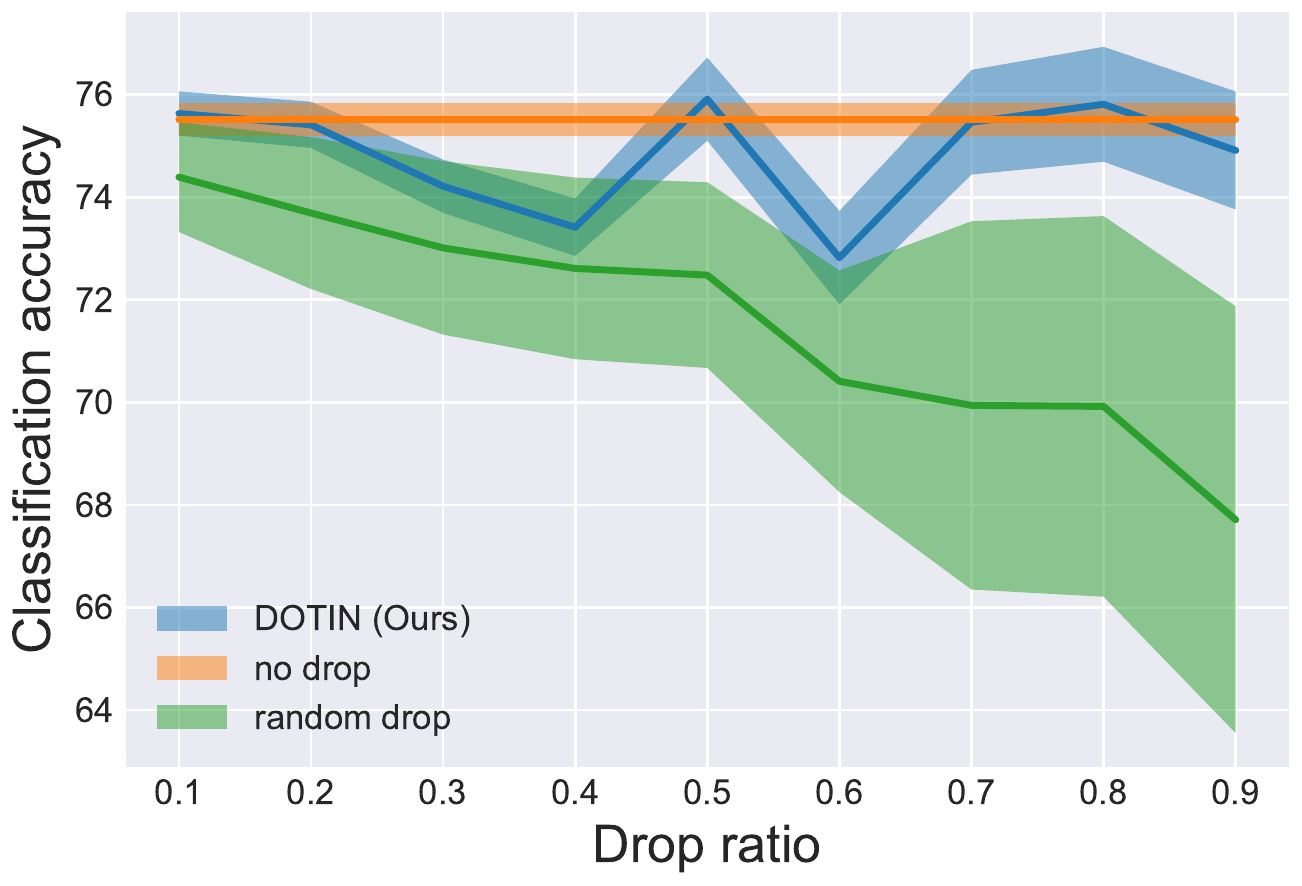}
\label{fig:ratio}
\end{minipage}%
}%
\subfigure[Speed v.s. drop ratio]{
\begin{minipage}[tb!]{0.325\linewidth}
\centering
\includegraphics[width=0.95\textwidth,angle=0]{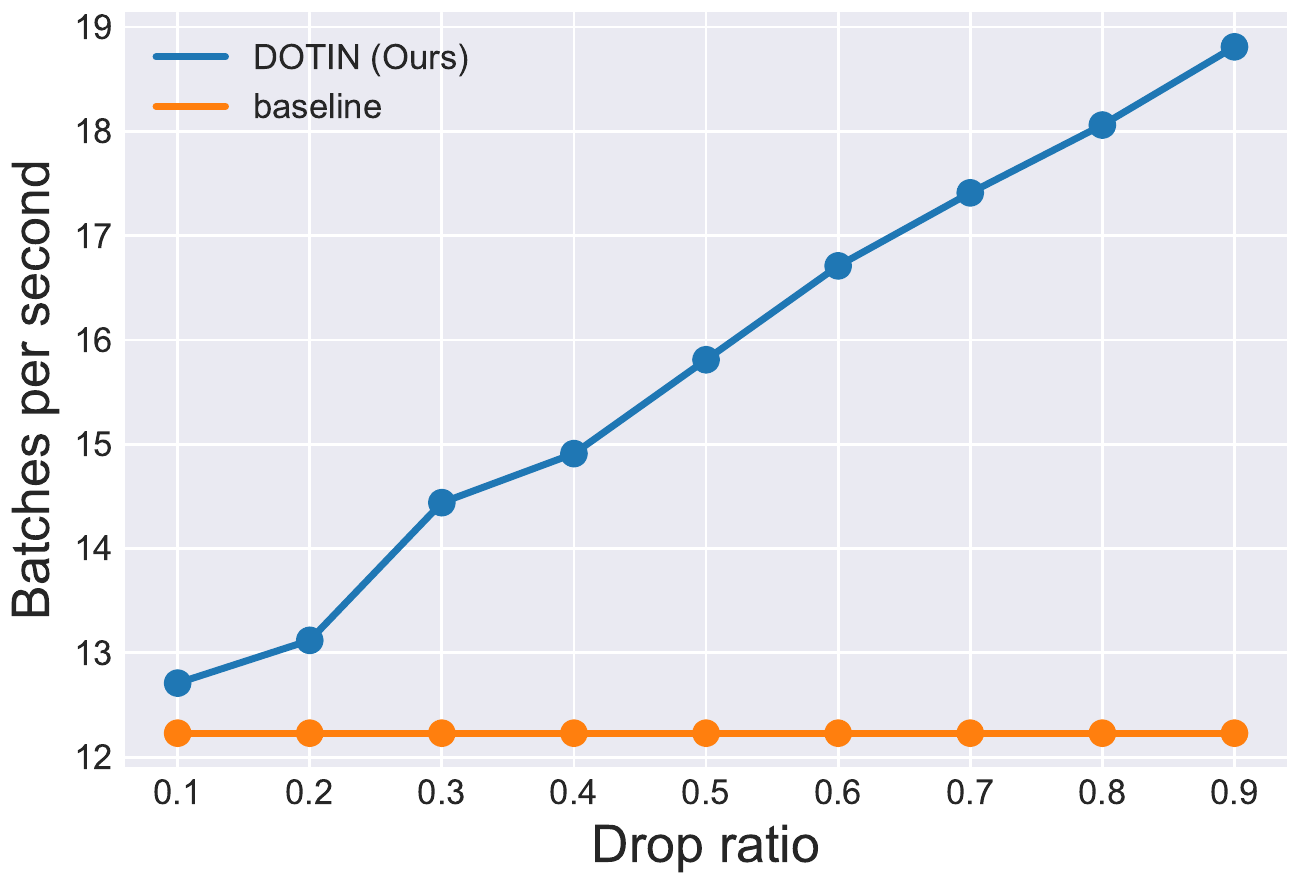}
\label{fig:time}
\end{minipage}%
}
\subfigure[Speed v.s. Num. of layers]{
\begin{minipage}[tb!]{0.325\linewidth}
\centering
\includegraphics[width=0.95\textwidth,angle=0]{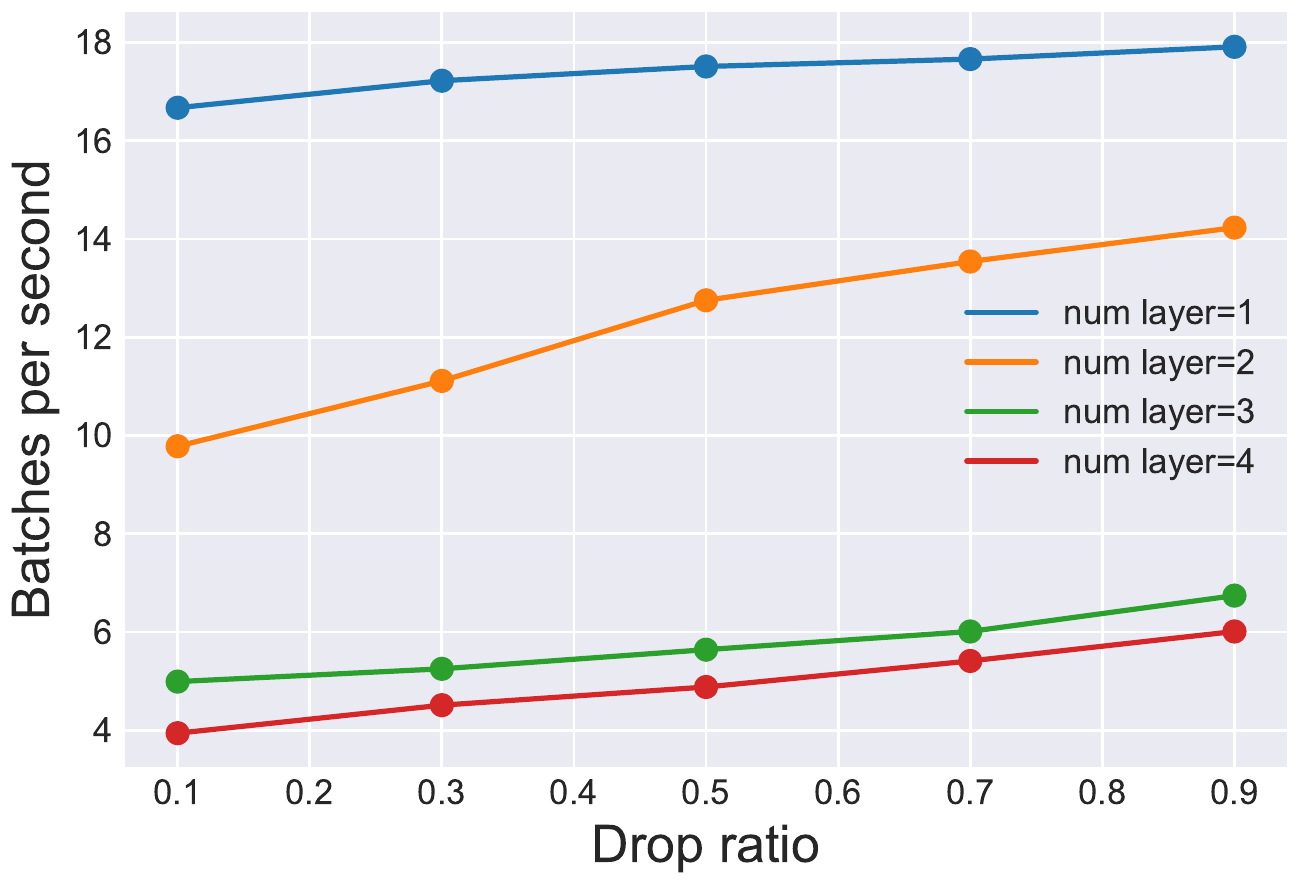}
\label{fig:layers}
\end{minipage}%
}
\vspace{-15pt}
\caption{Ablation on performance and batch training speed. Details can be seen in experiment setup.}
\vspace{-10pt}
\end{figure}
\begin{figure}[tb!]
\centering
\subfigure[Memory v.s. $num\ layers$]{
\begin{minipage}[tb!]{0.325\linewidth}
\centering
\includegraphics[width=0.95\textwidth,angle=0]{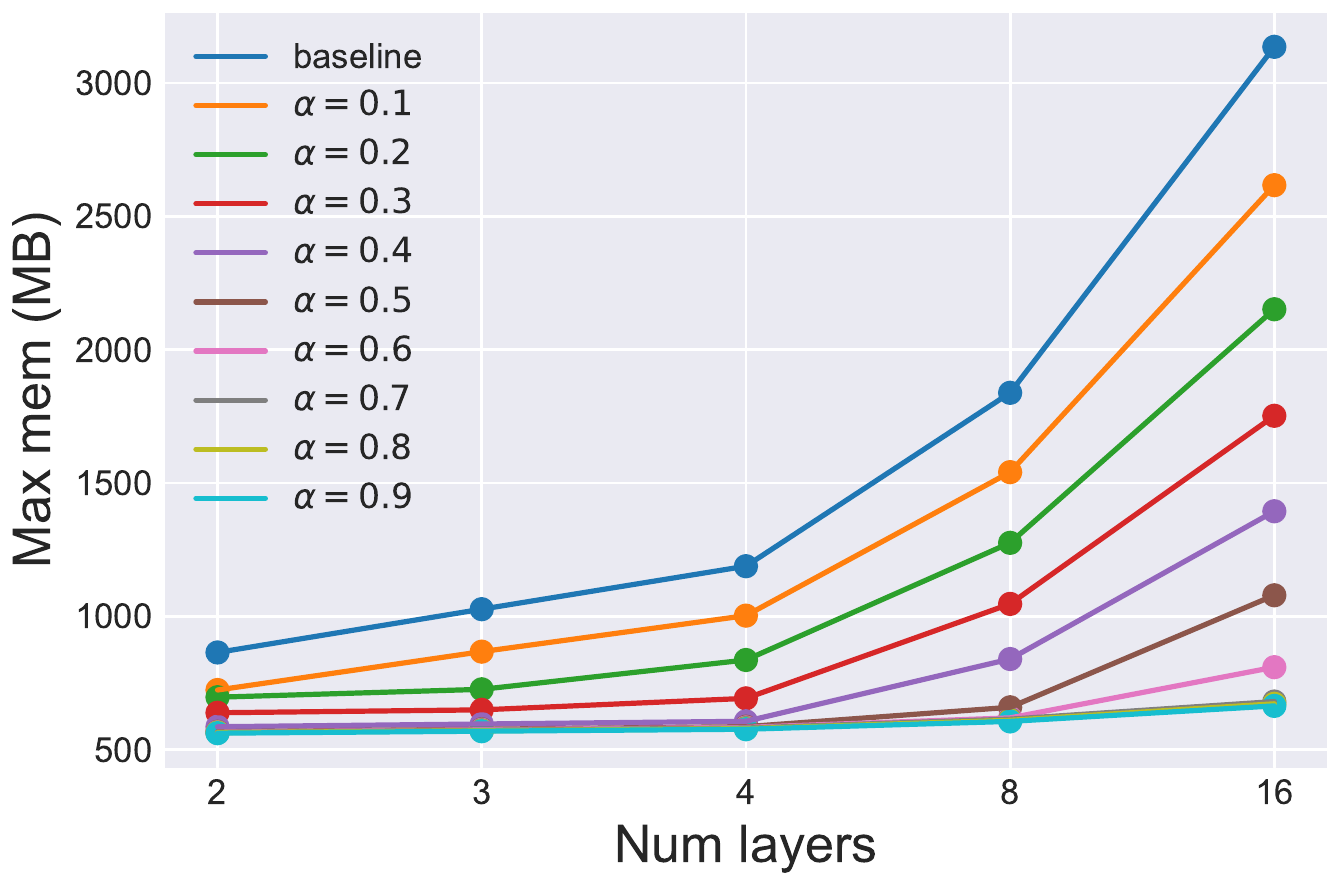}
\label{fig:storage1}
\end{minipage}%
}%
\subfigure[Memory v.s. batch size]{
\begin{minipage}[tb!]{0.325\linewidth}
\centering
\includegraphics[width=0.95\textwidth,angle=0]{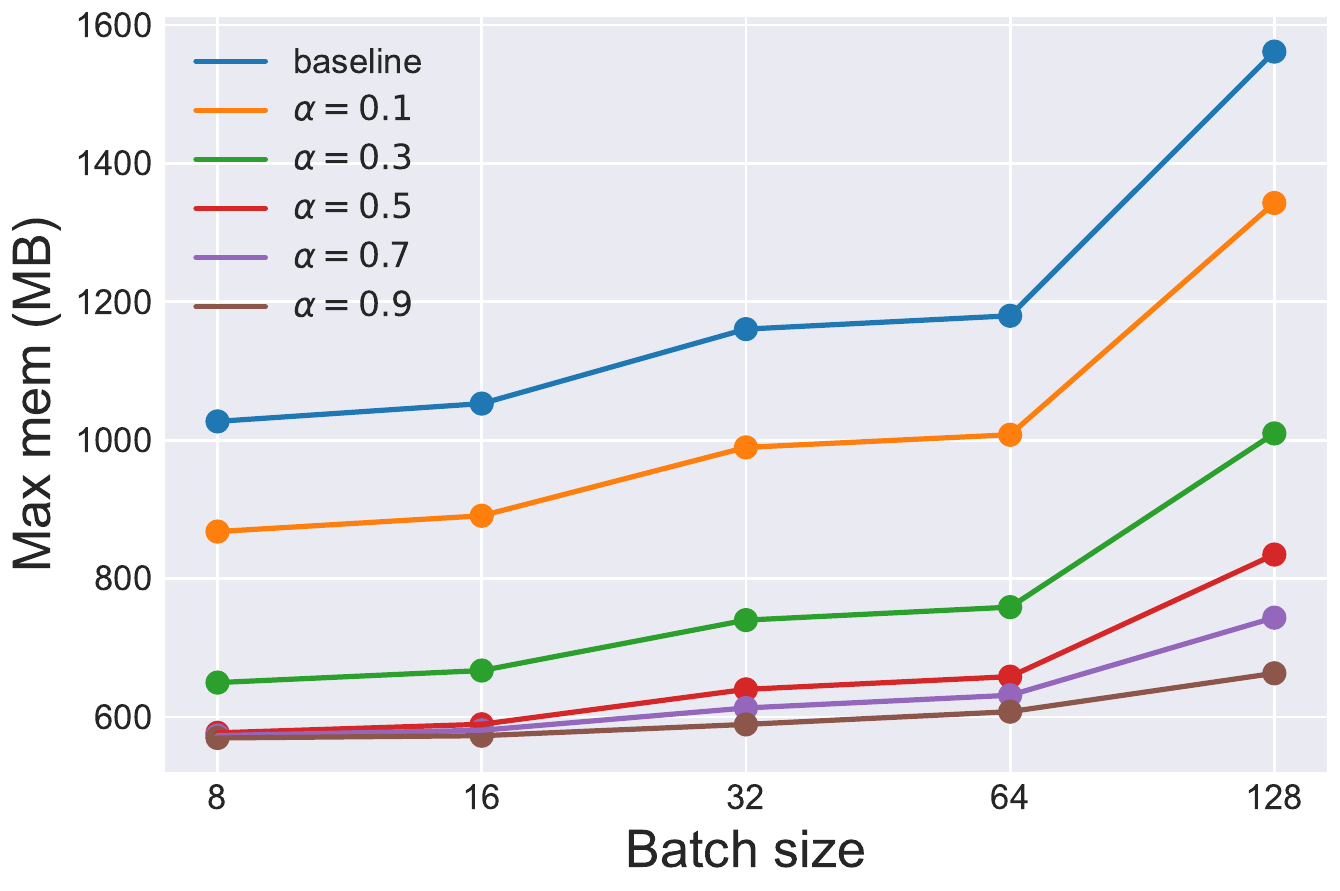}
\label{fig:storage2}
\end{minipage}%
}
\subfigure[Memory v.s. $hidden\ dim$]{
\begin{minipage}[tb!]{0.325\linewidth}
\centering
\includegraphics[width=0.95\textwidth,angle=0]{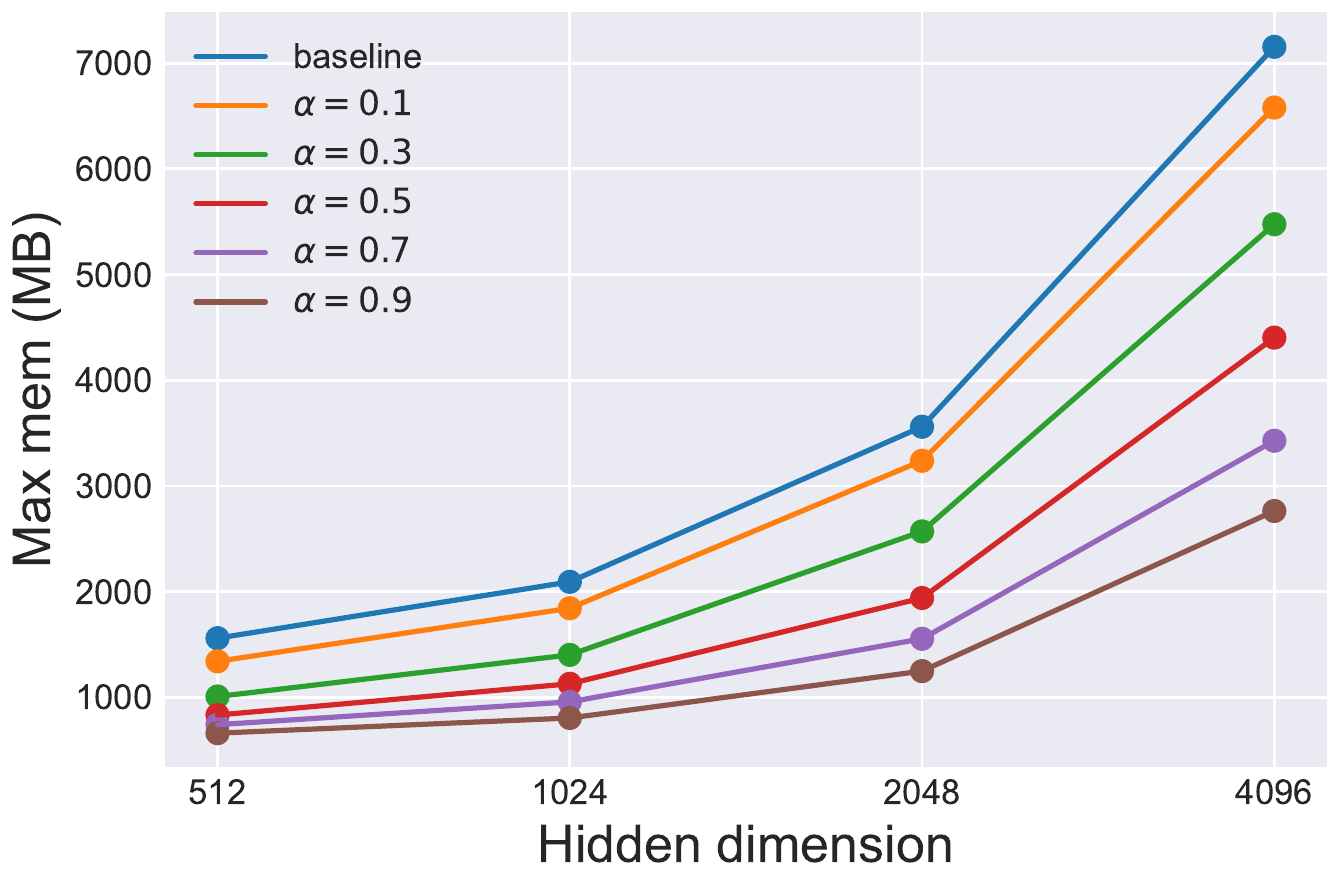}
\label{fig:storage3}
\end{minipage}%
}
\vspace{-15pt}
\caption{Ablation on memory. All the three experiments are using GAT backbone.}
\vspace{-15pt}
\end{figure}

\textbf{Number of layers. }To explore the speed rate of DOTIN with a different number of layers, we conduct experiments on single GED tasks on D\&D dataset. We fix the hidden dimension as 64 (enable to increase the number of layers) with ELU activation function~\cite{elu}. We set the Num. of layers in [1, 3, 5, 7] and drop ratio as in [0.1, 0.3, 0.5, 0.7, 0.9]. We illustrate each two combination ($num\ layers, drop\ ratio$) in Fig.~\ref{fig:layers}. It's not surprising for one layer, the efficiency reduction is not significant, and this is because we drop the nodes after the first DOTIN layer. Then, the remaining nodes only pass through one linear layer. The overall FLOPs gap is only $\alpha \times D \times D_1$, where $D$ and $D_1$ are input and output feature dimensions of the linear layer, respectively. For $num\ layer>1$, we can find DOTIN with $\alpha=0.9$ can speed up about 50\% than $\alpha=0.1$.

\textbf{Backbone. }DOTIN is agnostic to the backbone, and we integrate DOTIN into GCN~\cite{GCN} as its backbone. Different from vallina GCNs, DOTIN in GCN calculates the normalized adjacency matrix in each layer, since in each layer, we drop $\lceil N\times\alpha \rceil$ nodes, and correspondingly, the adjacency matrix will be modified. In detail, we set $num\ layers=2$ and hidden dimension as 256 for both GCN-based and GAT-based backbones. We find DOTIN in GCN-based backbone is more sensitive to drop ratio than GAT-based backbone, and we guess that's because GAT-based backbone can learn adjacency matrix by itself, which allows the classification virtual node to give higher edge weights to relevant nodes. While for GCN, the virtual node is initially fully-connected, i.e., in the message passing stage, the virtual node will take all the nodes in graph equally (all nodes contribute equally). Nevertheless, we can also observe that GCN leads to lower variance than GATs, which may be caused by the overfitting (GAT has two more linear parameters to learn in each layer, see Tab.~\ref{tab:gcn}).

\textbf{Memory. }We conduct a group of experiments by increasing hidden dimension, batch size, and $num\ layer$ with drop ratio $\alpha$. We first analyze the effect of $num\ layers$, we fix batch size as 8 and hidden dimension as 512. Then, we switch the $num\ layers$ from [2, 3, 4, 8, 16] and the results are given in Fig.~\ref{fig:storage1}. We find that with 16 layers, baseline (w/o drop) spends maximal 3135.55MB memory, while for $\alpha=0.9$, DOTIN only spends maximal 664.47MB memory, which is about 22\% of baseline. However, as illustrated in Fig.~\ref{fig:ratio}, even DOTIN drops 90\% nodes, the performance almost doesn't decrease. For Fig.~\ref{fig:storage2}, we fix the hidden dimension as 512 and $num\ layers$ as 3. Then, we change the batch size from 8 to 128 and drop ratios $\alpha$ from 0.1 to 0.9. For Fig.~\ref{fig:storage3}, we fix batch size as 128 and $num\ layers$ as 3, and increase the hidden dimension from 512 to 4096. For $hidden\ dim=512$, DOTIN with drop ratio 0.9 only spends 42\% memory of baseline (w/o drop) and for $hidden\ dim=4096$, DOTIN with 90\% drop ratio reduces 4390MB memory over baseline.

\begin{figure}[tb!]
\centering
\subfigure{
\begin{minipage}[tb!]{0.235\linewidth}
\centering
\includegraphics[width=0.95\textwidth,angle=0]{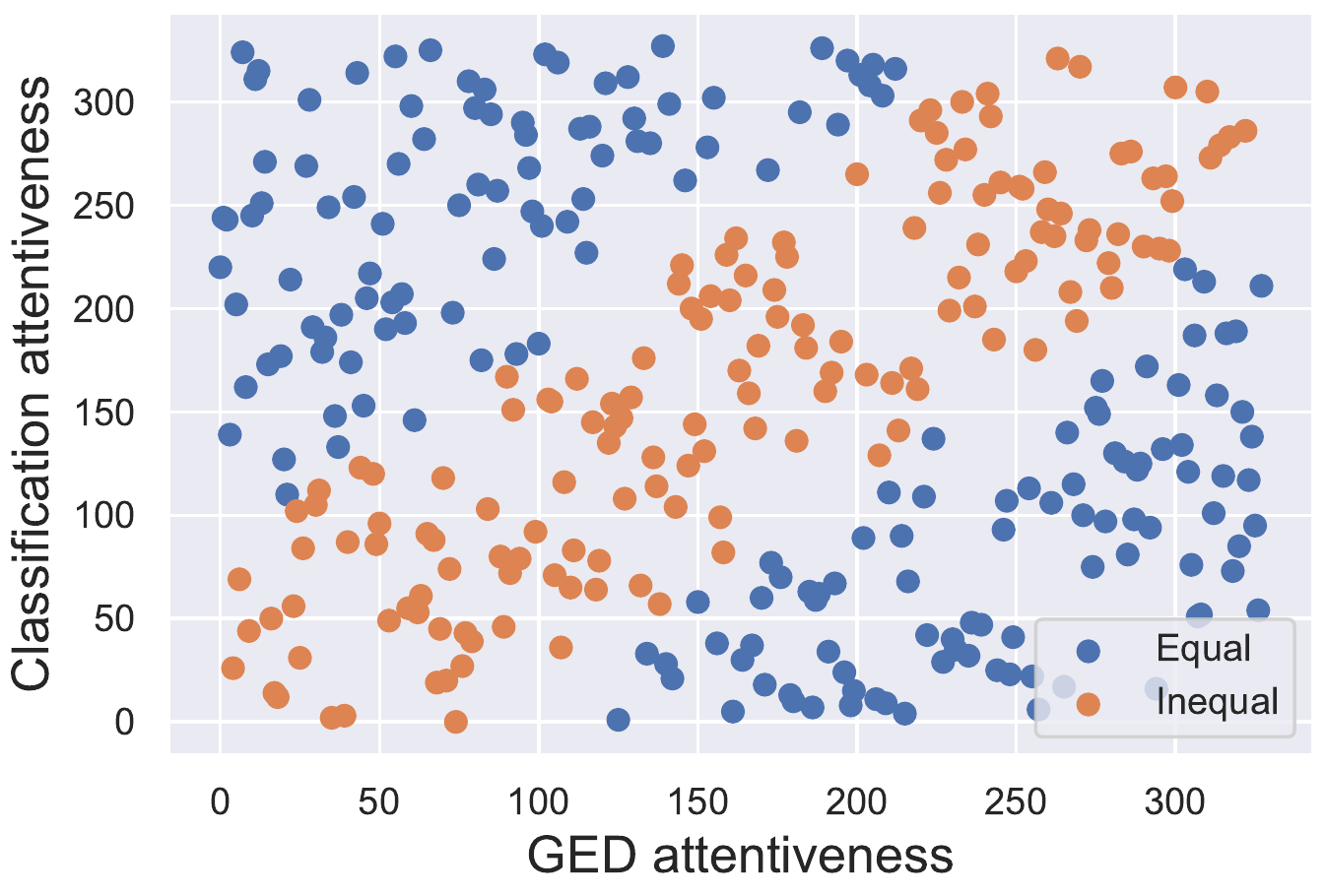}
\label{fig:dda}
\end{minipage}%
}%
\subfigure{
\begin{minipage}[tb!]{0.235\linewidth}
\centering
\includegraphics[width=0.95\textwidth,angle=0]{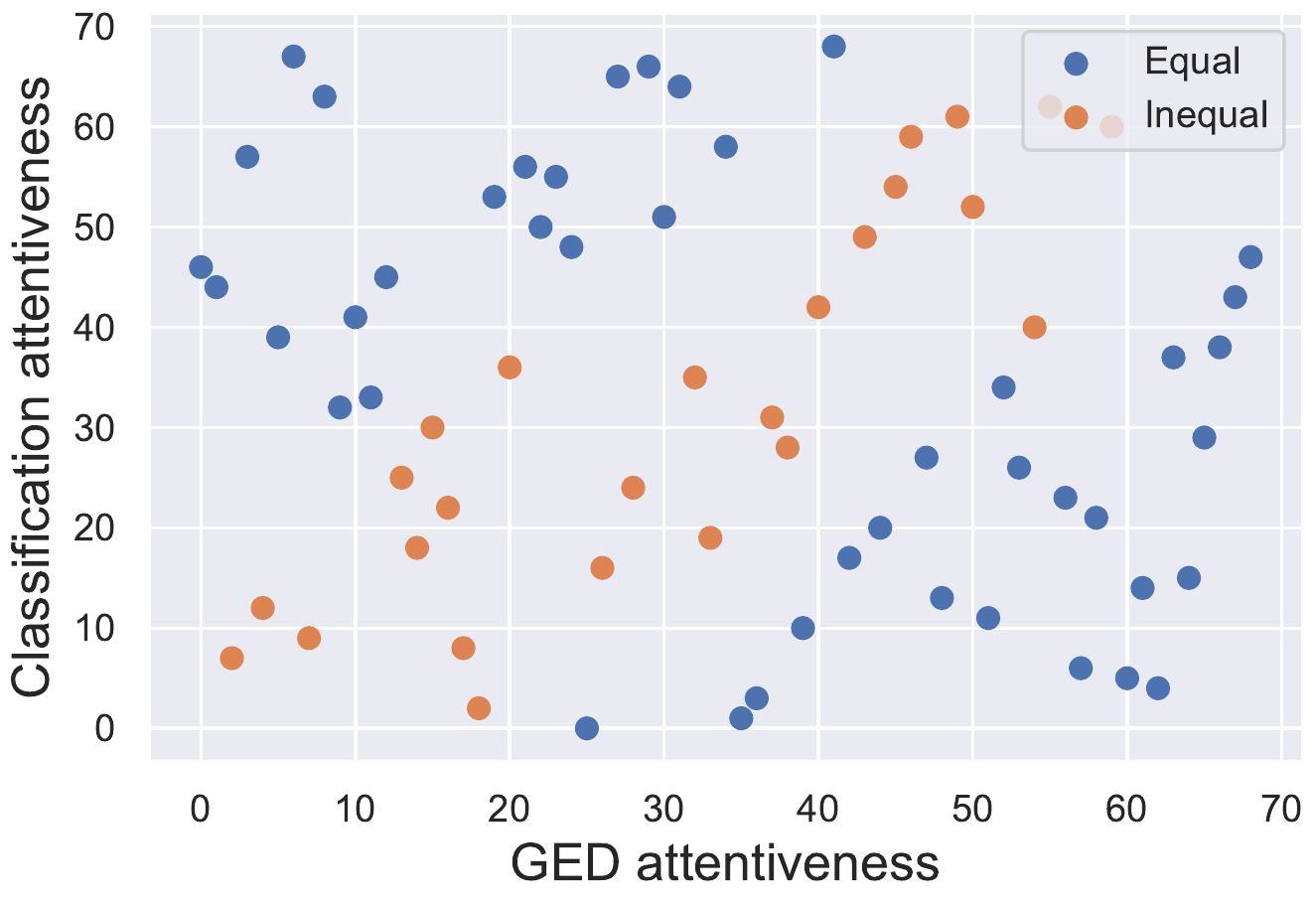}
\label{fig:ddb}
\end{minipage}%
}
\subfigure{
\begin{minipage}[tb!]{0.235\linewidth}
\centering
\includegraphics[width=0.95\textwidth,angle=0]{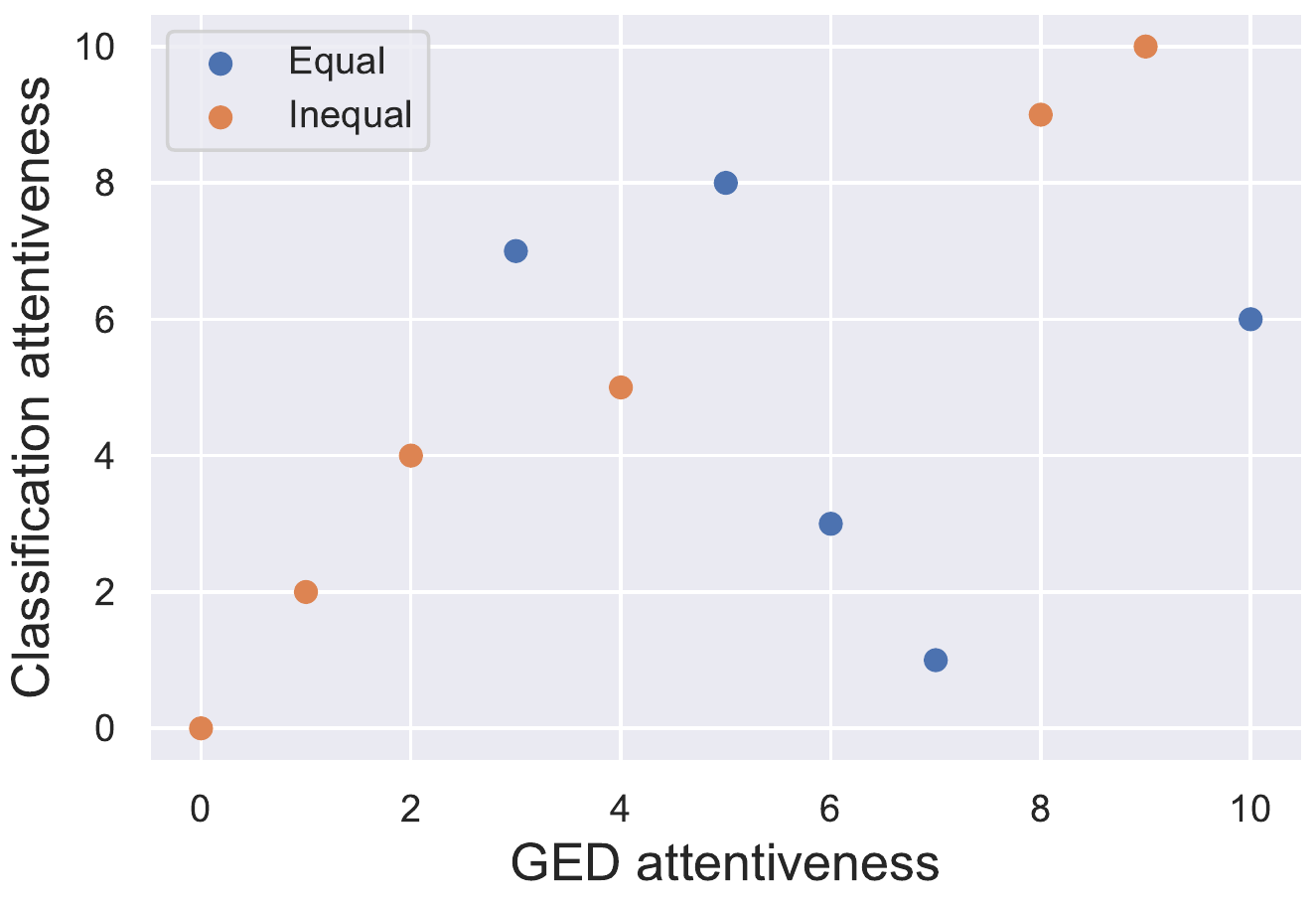}
\label{fig:proa}
\end{minipage}%
}
\subfigure{
\begin{minipage}[tb!]{0.235\linewidth}
\centering
\includegraphics[width=0.95\textwidth,angle=0]{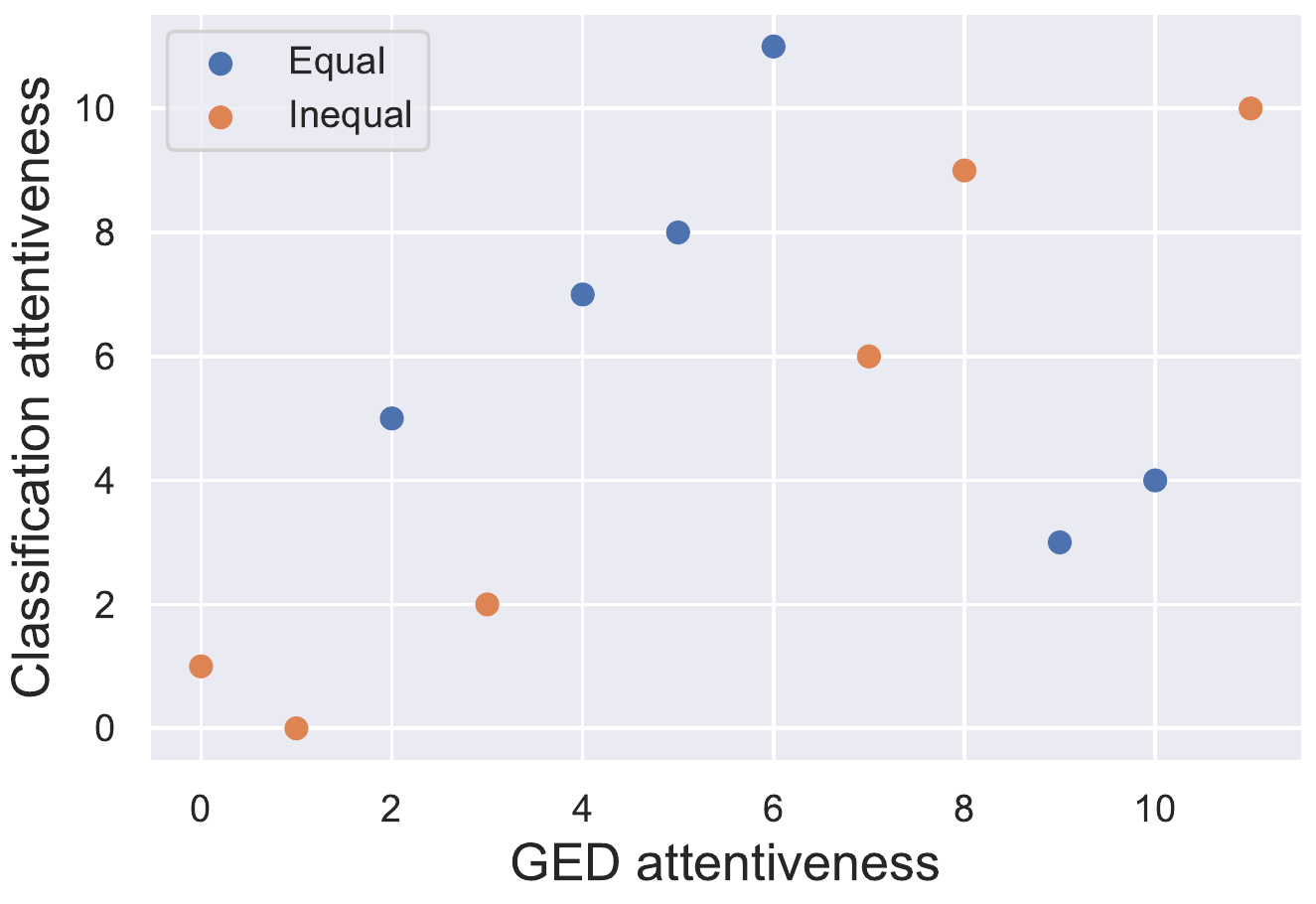}
\label{fig:prob}
\end{minipage}%
}
\vspace{-10pt}
\caption{Scatter plot for importance ranking on D\&D (left two examples) and PROTEINS (right two examples). The $x$ axis is the ranked importance for GED and $y$ for classification. Points close to diagonal (in orange) mean similar importance to two tasks (for orange, the attentive score ratio of the two tasks is at least 0.25), while those in blue indicate relevant to one task but less to the other.}
\vspace{-15pt}
\end{figure}

\begin{figure}[tb!]
\subfigure{
\begin{minipage}[tb!]{0.235\linewidth}
\centering
\includegraphics[width=0.95\textwidth,angle=0]{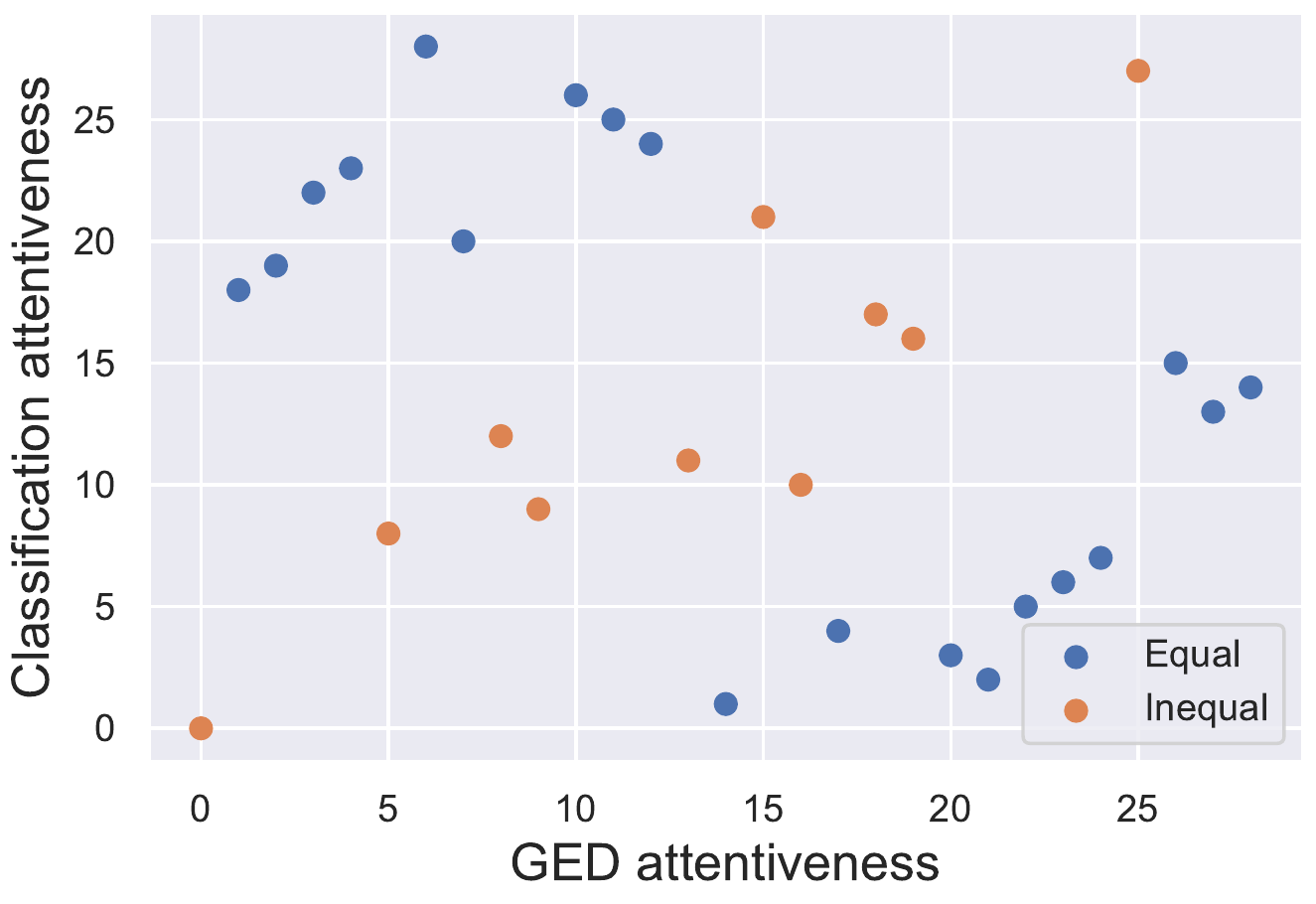}
\label{fig:nci1a}
\end{minipage}%
}%
\subfigure{
\begin{minipage}[tb!]{0.235\linewidth}
\centering
\includegraphics[width=0.95\textwidth,angle=0]{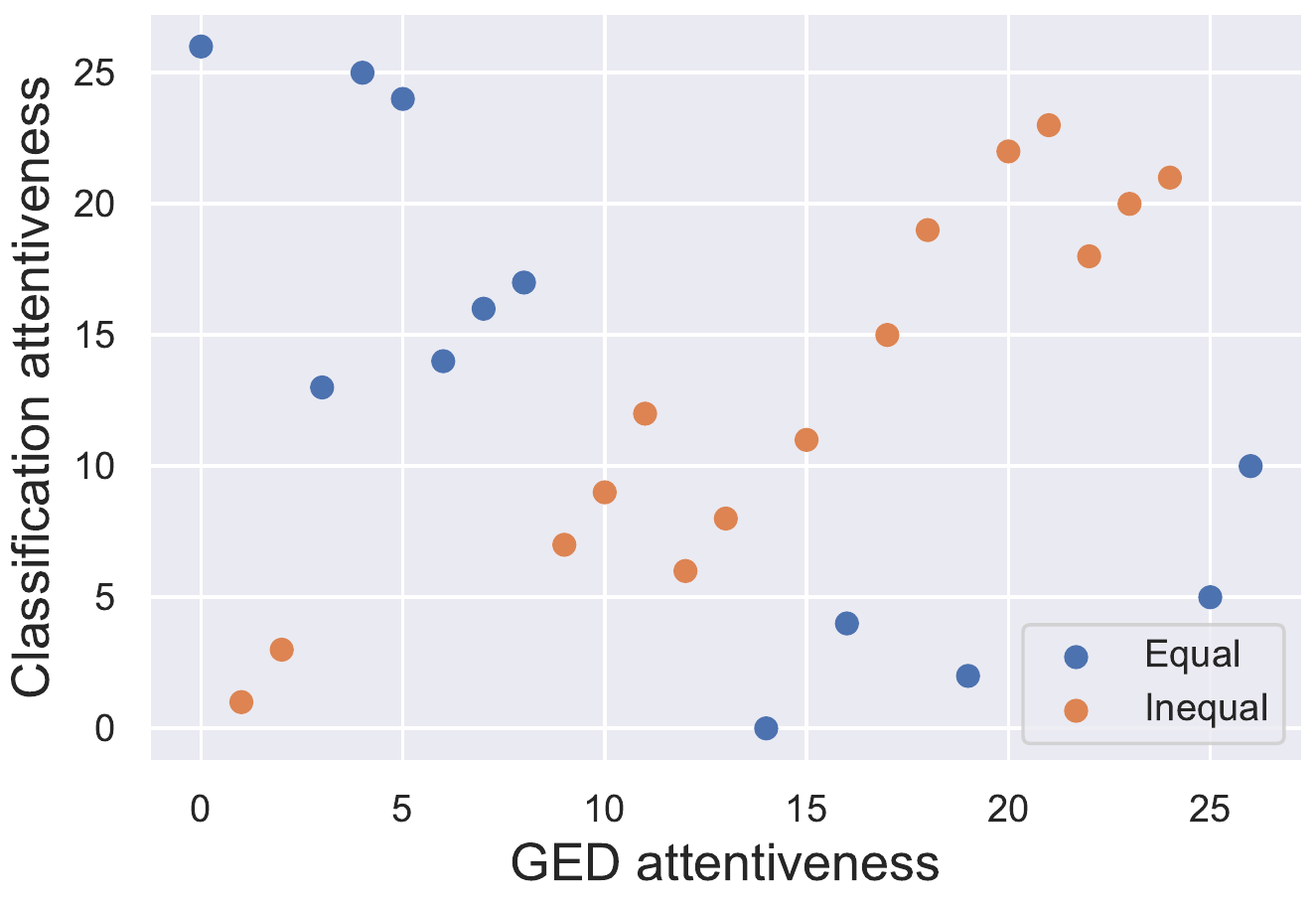}
\label{fig:nci1b}
\end{minipage}%
}
\subfigure{
\begin{minipage}[tb!]{0.235\linewidth}
\centering
\includegraphics[width=0.95\textwidth,angle=0]{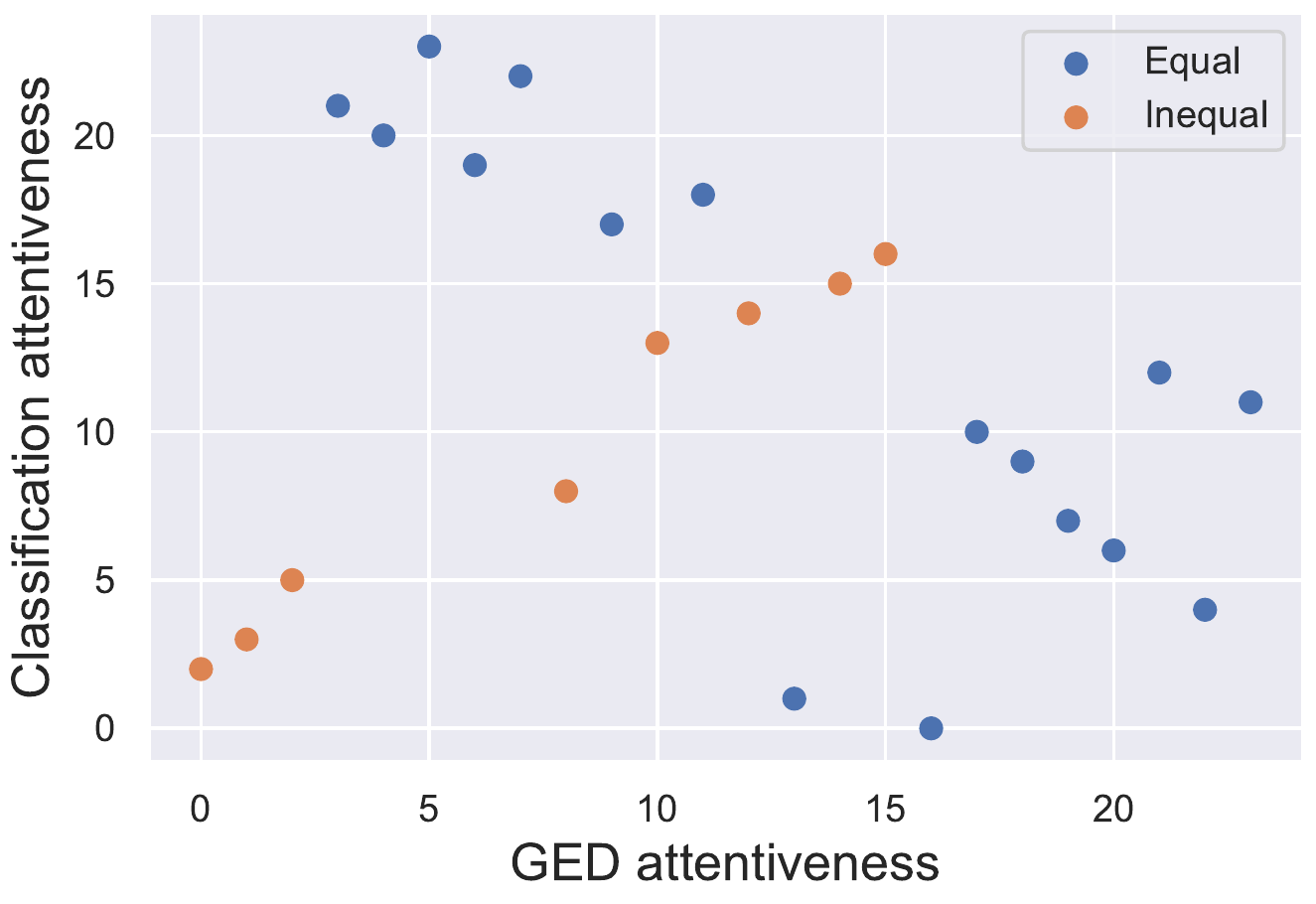}
\label{fig:nci109a}
\end{minipage}%
}
\subfigure{
\begin{minipage}[tb!]{0.235\linewidth}
\centering
\includegraphics[width=0.95\textwidth,angle=0]{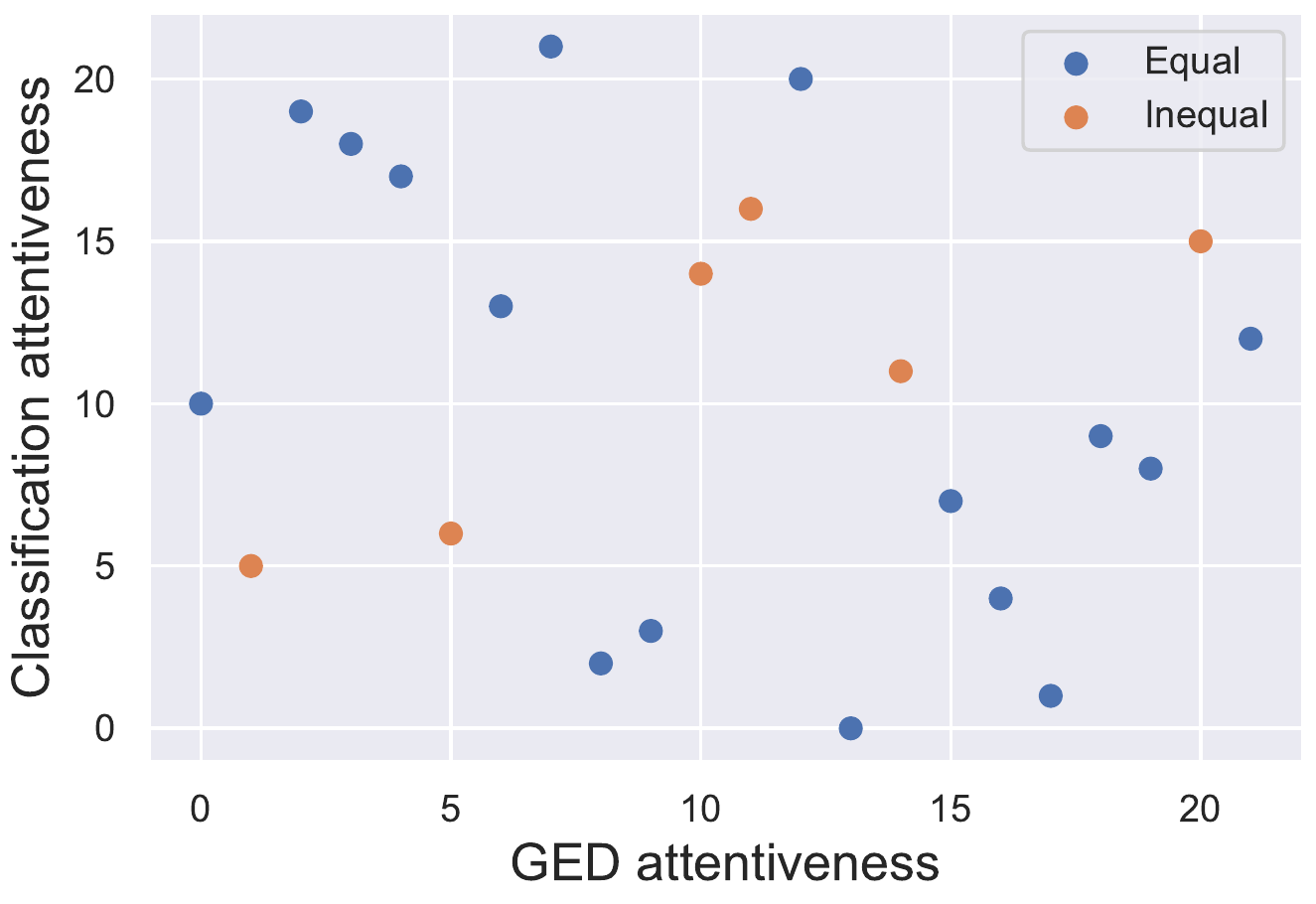}
\label{fig:nci109b}
\end{minipage}%
}
\vspace{-10pt}
\caption{Importance ranking on NCI1 (left two examples) and NCI109 (right two examples) datasets.}
\vspace{-2pt}
\end{figure}

\begin{table}[tb!]
    \centering
    \resizebox{0.72\textwidth}{!}{\begin{tabular}{c|c|c|c|c|c}
        \toprule
        Backbone & D\&D & PROTEINS & NCI1 & NCI109 & FRANKENSETEIN \\
        \hline
        GCN ($\alpha=0.1$) & 73.69 ± 0.32 & 73.29 ± 0.37 & 66.18 ± 0.89 & 65.16 ± 1.14 & 60.21 ± 1.01 \\
        GCN ($\alpha=0.5$) & 72.81 ± 0.48 & 72.51 ± 0.64 & 64.93 ± 1.03 & 63.39 ± 1.51 & 58.19 ± 1.27 \\
        GCN ($\alpha=0.9$) & 71.17 ± 0.66 & 71.14 ± 0.71 & 63.37 ± 1.24 & 61.74 ± 1.79 & 54.49 ± 1.38 \\
        \hline
        GAT ($\alpha=0.1$) & 75.63 ± 0.43 & \textbf{73.41 ± 0.48} & \textbf{67.91 ± 1.09} & \textbf{67.11 ± 1.28} & \textbf{62.18 ± 1.17} \\
        GAT ($\alpha=0.5$) & \textbf{76.41 ± 0.69} & 73.28 ± 0.61 & 67.58 ± 1.26 & 66.88 ± 1.67 & 61.26 ± 1.48 \\
        GAT ($\alpha=0.9$) & 75.99 ± 0.82 & 72.17 ± 0.79 & 66.98 ± 1.63 & 66.15 ± 1.99 & 60.99 ± 1.61 \\
        \bottomrule
    \end{tabular}}
    \caption{Node classification accuracy of DOTIN's variants with different backbones and drop ratios.}
    \label{tab:gcn}
\vspace{-15pt}
\end{table}
\textbf{Attentiveness for different tasks. }As illustrated in Fig.~\ref{fig:illustration}, node importance can be different for different tasks. We visualize the attentiveness of classification and GED tasks from Fig.~\ref{fig:dda} to Fig.~\ref{fig:nci109b}. For each dataset, we randomly select two graphs in test set for visualization. For training, we set hidden dimension as 256 and batch size as 8. For better comparison, we normalize the vectors before calculating the attention matrix. Then, we choose the first two rows (classification and GED virtual nodes) and sort by values. Then, we visualize the ranked number of the attentiveness. For each subplot, $x$ axis is the GED importance rank and $y$ axis classification importance rank. We can easily observe, the attentiveness for the two nodes with others are completely in different distribution, i.e., for different tasks, the \textit{task-important} nodes are different (lots of points are closed to $x$ or $y$ axis). This phenomenon also explains DOTIN outperforms gPool~\cite{topk}, since gPool may drop \textit{task-relevant} nodes, while DOTIN balances multiple tasks and drops nodes with averagely lower attentiveness. One of another interesting phenomenon is some nodes have the similar attentiveness to virtual nodes, we guess that is because of the \textit{over smoothing}~\cite{oversmoothing}, which is a classical problems in deep GNNs.



\vspace{-6pt}
\section{Conclusion and Outlook}
\vspace{-6pt}
We have proposed DOTIN, which aims to enhance the efficiency and scalability of existing GNNs. Our method directly regularizes the introduced $K$ virtual nodes with learnable vectors, corresponding to $K$ tasks for learning, which helps drop \textit{task-irrelevant} nodes. We apply DOTIN in GATs, with the extra cost of only $\mathcal{O}(1)$ parameters for learning. Experiments on graph classification and graph edit distance (GED) datasets show it achieves state-of-the-art results. Furthermore, DOTIN shows more advantage w.r.t cost-efficiency in multi-task settings (joint learning of graph classification and GED). 


For future work, we aim to extend our experiments to more graph tasks beyond the current two-task setting, by exploring more  graph-level tasks which in fact is so far seldom explored in GNNs literature (most graph-level learning works are focused on the two tasks: GED and classification as studied in this paper). This effort may also help better explore DOTIN's ability in multi-task learning, as well as its generalization ability to unseen tasks or new datasets, by considering node dropping as a certain way of improving generalization, beyond cost-efficiency and scalability as focused by this paper. 

\vspace{-6pt}
\section{Limitation and Potential Negative Social Impact}
\vspace{-6pt}
\label{sec:impact}
 \textbf{Limitations of the work. }Currently, DOTIN is more applicable to GAT-based models, as GAT learns edge weights by the network itself, making different virtual nodes target different tasks. While for pre-defined edges weights, multiple nodes learn {the} same embeddings, which is {contradictory} to our initial design and rationale. New edge pruning methods can be devised to address this issue.

\textbf{Potential negative societal impacts.} 
 Deep learning can be time and energy-consuming. While our methods speed up GNNs and also with much less memory, yet DOTIN may make the future GNN community toward deeper/wider architectures, which may in turn increase energy consumption.

\clearpage
\bibliography{neurips_2022}
\bibliographystyle{abbrv}

\newpage
\appendix
\section{More Discussion on Related works}
\label{sec:morerelated}
\textbf{Graph backbone and aggregation functions. }Various GNN backbones~\cite{GCN, GAT, graphsage} have been devised to capture graph structural information. They mainly differ in the specific aggregation functions. As a comprehensive study, GraphSAGE~\cite{graphsage} adopts four different aggregation methods, namely max, mean, GCN~\cite{GCN}, and LSTM~\cite{LSTM}. Instead, Graph Attention Networks~\cite{GAT} proposes attention-based methods~\cite{transformer}, where the edge weights are learned by network adaptively. Graph Isomorphism Networks (GINs)~\cite{GIN} proves that GNN can satisfy the 1-Weisfeiler-Lehman (WL) condition only with sum pooling function as aggregation function. Recently, DeeperGCN~\cite{deepergcn} proposes a trainable softmax and power-mean aggregation function that generalizes basic operators.
 
\textbf{Readout function. }The readout functions are widely designed by statistics e.g. min/max/sum/average of nodes to represent the graphs~\cite{GAT, GCN}. On the basis of global pooling schemes, SortPooling~\cite{sortpool} chooses the top-$k$ values from the sorted list of the node features to construct outputs. Instead, in this paper we propose to adopt a  virtual node to help explicitly select the nodes to drop out. Interestingly, the term of \textit{virtual node} is also called and used in VCN~\cite{graph-virtual}. In that work, the virtual node together with the real nodes are fed into an RNN for extracting the embedding of the whole graph\footnote{We are a bit in short of confidence to rephrase the exact structure of the network for graph feature extraction in that 5-page arxiv paper~\cite{graph-virtual}, whereby the details are not well described, and no modern GNN structure, but instead RNN is used.}.



\end{document}